\definecolor{hookersgreen}{rgb}{0.0, 0.44, 0.14} 
\definecolor{crimson}{rgb}{0.86, 0.08, 0.24}
\definecolor{ceruleanblue}{rgb}{0.16, 0.32, 0.75}
\definecolor{azure}{rgb}{0.0, 0.5, 1.0}
\definecolor{frenchblue}{rgb}{0.0, 0.45, 0.73}
\definecolor{forestgreen(traditional)}{rgb}{0.0, 0.27, 0.13}
\definecolor{mygreen}{rgb}{0.09, 0.45, 0.27}
\definecolor{myblue}{rgb}{0.2383,0.5195,0.7734}
\definecolor{mygreen}{rgb}{0.6445,0.9297,0.0039}
\definecolor{darklavender}{rgb}{0.45, 0.31, 0.59}
\definecolor{americanrose}{rgb}{1.0, 0.01, 0.24}
\definecolor{pigblue}{rgb}{0.2, 0.2, 0.6}
\definecolor{blue(ryb)}{rgb}{0.01, 0.28, 1.0}
\definecolor{amethyst}{rgb}{0.6, 0.4, 0.8}
\definecolor{deepmagenta}{rgb}{0.8, 0.0, 0.8}
\definecolor{carminered}{rgb}{1.0, 0.0, 0.22}
\definecolor{iris}{rgb}{0.35, 0.31, 0.81}
\newtheorem{lemma}{\hspace{0pt}\bf Lemma}
\newtheorem{proposition}{\hspace{0pt}\bf Proposition}
\newtheorem{theorem}{\hspace{0pt}\bf Theorem}
\newtheorem{corollary}{\hspace{0pt}\bf Corollary}
\newtheoremstyle{mystyle} 
    {\topsep}                    
    {\topsep}                    
    {\normalfont}                
    {}                           
    {\bfseries}                  
    {.}                          
    {.5em}                       
    {}                           
\theoremstyle{mystyle}
\newtheorem{remark}{Remark}
\newcommand{\ones}{\mathbf 1}
\newcommand{\reals}{{\mbox{\bf R}}}
\newcommand{\diag}{\mathop{\bf diag}}
\def \Cov     {\text{\normalfont Cov}  }
\def \diag    {\text{\normalfont diag} }
\newcommand{\Var}{\mathrm{Var}}
\def \reals    {{\mathbb R}}
\def\mbE{{\ensuremath{\mathbb E}}}
\def\mbR{{\ensuremath{\mathbb R}}}
\def\ccalA{{\ensuremath{\mathcal A}}}
\def\ccalD{{\ensuremath{\mathcal D}}}
\def\ccalE{{\ensuremath{\mathcal E}}}
\def\ccalG{{\ensuremath{\mathcal G}}}
\def\ccalN{{\ensuremath{\mathcal N}}}
\def\ccalR{{\ensuremath{\mathcal R}}}
\def\ccalW{{\ensuremath{\mathcal W}}}
\def\ccalV{{\ensuremath{\mathcal V}}}
\def\ccal0{{\ensuremath{\mathcal 0}}}
\def\hhatd{{\ensuremath{\hat d}}}
\def\bbone{{\ensuremath{\mathbf 1}}}
\def\bbA{{\ensuremath{\mathbf A}}}
\def\bbB{{\ensuremath{\mathbf B}}}
\def\bbD{{\ensuremath{\mathbf D}}}
\def\bbE{{\ensuremath{\mathbf E}}}
\def\bbH{{\ensuremath{\mathbf H}}}
\def\bbI{{\ensuremath{\mathbf I}}}
\def\bbL{{\ensuremath{\mathbf L}}}
\def\bbW{{\ensuremath{\mathbf W}}}
\def\bbS{{\ensuremath{\mathbf S}}}
\def\bbX{{\ensuremath{\mathbf X}}}
\def\bbY{{\ensuremath{\mathbf Y}}}
\def\bba{{\ensuremath{\mathbf a}}}
\def\bbh{{\ensuremath{\mathbf h}}}
\def\bbx{{\ensuremath{\mathbf x}}}
\def\bby{{\ensuremath{\mathbf y}}}
\def\bb0{{\ensuremath{\mathbf 0}}}
\def\hbA{{\hat{\ensuremath{\mathbf A}} }}
\def\hbS{{\hat{\ensuremath{\mathbf S}} }}
\def\hbX{{\hat{\ensuremath{\mathbf X}} }}
\def\hbY{{\hat{\ensuremath{\mathbf Y}} }}
\def\tbA{{\tilde{\ensuremath{\mathbf A}} }}
\def\tbD{{\tilde{\ensuremath{\mathbf D}} }}
\def\bbDelta{\boldsymbol{\Delta}}
\def\bbTheta{\boldsymbol{\Theta}}
\def\bbPhi{\boldsymbol{\Phi}}
\newcommand\revised[1]{#1}
\begin{document}
\title{Graph Convolutional Neural Networks Sensitivity under Probabilistic Error Model}
\author{Xinjue~Wang,~\IEEEmembership{Student Member,~IEEE,}
        Esa~Ollila,~\IEEEmembership{Senior Member,~IEEE,}
        and~Sergiy~A.~Vorobyov,~\IEEEmembership{Fellow,~IEEE}
\thanks{All the authors are with the Department of Information and Communications Engineering, Aalto University, Finland. This research was partially supported by the Research Council of Finland under Grant 359848 and~357715.}
}
\maketitle

\begin{abstract}
Graph Neural Networks (GNNs), particularly Graph Convolutional Neural Networks (GCNNs), have emerged as pivotal instruments in machine learning and signal processing for processing graph-structured data.
This paper proposes an analysis framework to investigate the sensitivity of GCNNs to probabilistic graph perturbations, directly impacting the graph shift operator (GSO).
Our study establishes tight expected GSO error bounds, which are explicitly linked to the error model parameters, and reveals a linear relationship between GSO perturbations and the resulting output differences at each layer of GCNNs.
This linearity demonstrates that a single-layer GCNN maintains stability under graph edge perturbations, provided that the GSO errors remain bounded, regardless of the perturbation scale.
For multilayer GCNNs, the dependency of system's output difference on GSO perturbations is shown to be a recursion of linearity.
Finally, we exemplify the framework with the Graph Isomorphism Network (GIN) and Simple Graph Convolution Network (SGCN).
Experiments validate our theoretical derivations and the effectiveness of our approach.
\end{abstract}
\begin{IEEEkeywords}
Sensitivity analysis, graph convolutional neural network, graph shift operator, structural perturbation
\end{IEEEkeywords}

\section{Introduction}
\label{sec:intro}
\IEEEPARstart{G}{raph} neural networks (GNNs) have steadily gained prominence as an innovative tool in machine learning and signal processing, exhibiting unparalleled efficiency in processing data encapsulated within complex graph structures~\cite{Bronstein17Geometric, Dong20-GSPML, Wu21Survey}. 
Uniquely designed, GNNs utilize a system of intricately coupled graph filters (GFs) with nonlinear activation functions, enabling the effective transformation and propagation of information within the graph~\cite{Isufi22GFOverview}.

Different GNN architectures can be delineated based on the GFs, which are an integral to the functioning of GNNs. 
A notable example of these architectures uses graph-convolutional filters.
The GNN employing this design is known as the Graph Convolutional Neural Network (GCNN).
Some examples of GCNNs include the vanilla Graph Convolutional Network (GCN)~\cite{Kipf17-vanillaGCN}, Graph Isomorphism Network (GIN)~\cite{Xu19-GIN}, Simple Graph Convolution Network (SGCN)~\cite{Li19-originalSGCN,Wu19-SGCN}, and Cayley Graph Convolutional Network (CayleyNet)~\cite{Levie19Cayley}.
In contrast to the aforementioned GCNNs, there exist non-convolutional GNNs such as the Graph Attention Network (GAT)~\cite{Velickovic18GAT} and Edge Varying Graph Neural Network (EdgeNet)~\cite{Isufi22EdgeNet}, which utilize edge-varying graph filters~\cite{Coutino19EdgeFilter}.

This paper delves into the GCNN, which blends graph convolutional filters with nonlinear activation functions. 
Graph convolutional filters couple the data and graph with the underlying graph matrix, named graph shift operator (GSO), which can be, for example, the graph adjacency matrix or graph Laplacian, encoding the interactions between data samples~\cite{Sandryhaila13-DSP}.
Based on the GSO, the graph filter captures the structural information by aggregating the data propagated within its $k-$hop neighborhoods, and feeds it to the next layer after processing, which can be applying graph coarsening and pooling~\cite{defferrard2016convolutional, Xu19-GIN}.
As the key component of GCNNs, GSO presents the graph structure, and is typically assumed to be perfectly known.
The precise estimation of the hidden graph structure is essential for successfully performing feature propagation in a convolution layer~\cite{Dong16-Learning, segarra2017network, buciulea2022learning}.

GSOs form the foundation of GCNN structures.
Any perturbation in the graph structure has a direct bearing on the operations of a GCNN.
Previous studies in graph signal processing (GSP) and GNN have examined both deterministic and probabilistic perturbations affecting GSOs.
A probabilistic graph perturbation model for a partially correct estimation of the adjacency matrix is proposed in~\cite{Miettinen21-ErrorEffect}, where a perturbed graph is modeled as a combination of the true adjacency matrix and a perturbation term specified by Erd\H{o}s-R\'enyi (ER) graph.
The work~\cite{Gao21-Stability} explores perturbations in graphs using random edge sampling, a scheme characterized by randomly deleting existing edges. 
In~\cite{xu19pgd}, a GSO perturbation strategy is formulated leveraging a general first-order optimization method, which concurrently imposes a constraint on the extent of edge perturbation.
In~\cite{Ceci20-Graph}, the authors propose to perturb eigenvector pairs of the graph Laplacian, considering single and multiple edge perturbations, under small perturbation assumption.
Here, small perturbations refer to changes in a small percentage of edges.

%
The stability of GFs and GCNNs under GSO perturbations is one of the key research areas in signal processing (SP) and computer science (CS).
In the SP community, research focuses on the relationship between the system's output differences and the GSO differences under evasion attacks, emphasizing changes in the learned representation.
In~\cite{Kenlay21-Rewire}, the authors provide bounds on the output changes of spectral GFs resulting from double edge rewiring on normalized augmented adjacency matrices. 
This study extends the stability results to SGCN and gives theoretical bounds.
In~\cite{Kenlay2021InterpretableSB}, the authors present interpretable bounds to verify the stability of spectral GFs against graph edge perturbations. 
These bounds are derived under the constraint that the degree of any node after perturbation cannot exceed twice its original degree.
In~\cite{Gama20-Stability}, the authors apply an additive error model with norm-bounded perturbations on unspecified GSOs to provide stability bounds for multi-layer GCNNs.
This model is not generic as it does not explicitly account for the perturbation of graph edges.
It primarily considers perturbations resembling a uniform scaling of edge weights, a limitation noted in~\cite{levie2021transferabilityJMLR}.
Additionally, the bound of error matrix is defined based on the smallest operator norm achievable via node permutation. 
However, this permutation assumption may not suit social or citation networks where node identification is label-dependent, as noted in~\cite{Kenlay21-Rewire}.
In~\cite{Gao21-Stability}, authors consider random edge deletions as the perturbation on GSOs, specifically focusing on adjacency matrices and graph Laplacians.
It concludes that both the GF and GCNN are linearly stable with respect to several factors, including the probability of edge dropping, nonlinearity, and the width and depth of the network architecture.
Nevertheless, in the experiments of~\cite{Gao21-Stability}, the maximum edge deletion probability is set to $6\%$, indicating a limited scale on perturbation.
Works in CS~\cite{dai18atkgph, Zugner18_AttackonGraph_KDD, wu19advegsgraphdata, wang2021certifiedKDD, lin22gphspecattck} focus on the effects of adversarial attacks affecting GCNN accuracy, considering both evasion and poisoning attacks.
The focus is on the impacts of such attacks on the downstream task.
For instance, under evasion attacks,~\cite{Zugner18_AttackonGraph_KDD} demonstrates the reduction on GCNN's accuracy under small perturbations, while maintaining the degree distributions after the attack, and~\cite{lin22gphspecattck} demonstrates the significant drop of accuracy of GCN when 5\% of edges are altered.


In this paper, we introduce a sensitivity analysis framework for GCNN under the probabilistic edge perturbation model~\cite{Miettinen21-ErrorEffect}.
\revised{We understand \textbf{\textit{stability}} as the characteristic of a system to maintain bounded output under perturbations, while \textbf{\textit{sensitivity analysis}} is an examination of how variations in the output depend on influencing factors.}
Our analysis concentrates on studying the effects of evasion attacks.
We use statistical analysis to give expected bounds for GSO errors (Theorem~\ref{thm:Case1Adj} and Proposition~\ref{prop:Case2NorAdj}).
These error bounds are explicitly dependent on the parameters of the error model.
Then, we establish a sensitivity analysis framework for both GF (Theorem~\ref{thm:gfdistance}) and multilayer GCNN (Theorem~\ref{thm:gcnsensitivity}) by giving expected bounds for differences of outputs because of GSO errors.
Finally, we exemplify the framework with GIN (Corollary~\ref{corr:GIN_MLP}) and SGCN (Corollary~\ref{corr:SGCN_sensitivity}), and empirically show that under large-scale graph perturbations (significant edge modifications), GCNNs maintain stability.

Our detailed contributions are summarized as follows.

\textit{1. Probabilistic error model.}
The probabilistic edge perturbation model considered is general and practically appealing.
It is grounded in stochastic block models, supports both deletion and addition of edges, and permits a broader perturbation scale. 
The corresponding analysis approach contrasts with the constrained perturbations in existing GCNN analyses, which involve such restrictions as permitting only edge deletions in~\cite{Gao21-Stability}, double edge rewiring in~\cite{Kenlay21-Rewire}, and small norm bounded errors in~\cite{Gama20-Stability}.

\textit{2. Tight GSO error bound.}
We give tighter expected bounds on GSO errors compared to our previous conference work \cite{Wang22-Eusipco}, in which the bounds are deterministic.
We use the $\ell_1$ norm suggested in \cite{Kenlay2021InterpretableSB} to bound the $\ell_2$ norm and make this bound interpretable by specifically tracking the changed node degrees, which can be directly linked to parameters of the error model (probabilities of deleting and adding edges).
Additionally, our bound does not require the eigendecomposition of GSO~\cite{Gama20-Stability, Gao21-Stability}, which is computationally heavy for large graphs.

\textit{3. Generic sensitivity analysis framework.}
Compared to previous works~\cite{Gama20-Stability, Gao21-Stability, Kenlay21-Rewire}, our proposed analysis framework is more generic in the following aspects. 
\textit{(i)} We remove the assumption on limited scale perturbation and allow for a large perturbation budget, for instance that 50\% of edges are deleted and 70\% of edges are added (compared to the original number of edges).
Our analysis is shown empirically to be valid even under such perturbation, while the maximum edge perturbation addressed in the current literature is $10\%$ of edges~\cite{Kenlay2021InterpretableSB}.
\textit{(ii)} We provide expected bounds under a probabilistic perspective, while the deterministic perturbations can be seen as special cases of our analysis.
\textit{(iii)} 
This framework is applicable to general GCNN models, with specific adjustments for GSO, graph shifts count, network layer count, and activation functions.

\textbf{Outline.}
The remainder of this paper is structured as follows.
In Sections~\ref{sec:preliminaries} and~\ref{sec:problem_formu}, we establish the fundamentals of GCNNs and proceed to formulate the problem.
Section~\ref{sec:GSOsensitivity} bounds the difference between original and perturbed GSOs, with particular emphasis on two cases: the adjacency matrix and its normalized version. 
Section~\ref{sec:GCNStability} encompasses both GFs and GCNNs like GIN and SGCN, and demonstrates that variations in the output of each GCNN layer in response to graph perturbations are linearly bounded.
Empirical validations presented in Section~\ref{sec:NumExperiments} use numerical experiments with both synthetic and real-world data to corroborate the proposed theorems, thereby attesting to the reliability of our sensitivity analysis model.
Section~\ref{sec:Conclusion} concludes the paper and discusses the future work.

%
\textbf{Notation.} 
Boldface lower case letters such as $\bbx$ represent column vectors, while boldface capital letters like $\bbX$ denote matrices. 
A vector full of ones is symbolized as $\ones_{N}$, and a $N \times N$ matrix full of ones is expressed as $\ones_{N \times N} = \ones_{N}\ones_{N}^\top$. 
The identity matrix of size $N \times N$ is represented as $\bbI_{N \times N}$.
The $i$-th row or column of the matrix $\bbA$ is given as $\bbA_i$, and the $(i,j)$-th element in matrix $\bbA$ is denoted as $[\bbA]_{i,j}$ or $\bbA_{i,j}$.
\revised{Vector $\ell_1$ norm is defined as follows: $\|\bba\|_1 = \sum_{j} |\bba_j|$.}
Matrix norms are defined as follows: the $\ell_1$ norm is represented as $\|\bbA\|_1 = \max_j\sum_i|\bbA_{i,j}|$, the $\ell_2$ norm as $\|\bbA\| = \|\bbA\|_2 = \sqrt{\max(\text{eig}(\bbA^\top\bbA))}$ (largest singular value of $\bbA$), and the $\ell_\infty$ norm as $\|\bbA\|_\infty = \max_i\sum_j|\bbA_{i,j}|$.
In addition, the Hadamard product is expressed with the symbol $\circ$.
We use $\textrm{Pr}(\cdot)$ for probability, $\mbE(\cdot)$ for expectation, $\Var(\cdot)$ for variance, and $\Cov(\cdot, \cdot)$ for covariance.

\section{Preliminaries}
\label{sec:preliminaries}
Graph theory, GSP, and GCNN form the cornerstone of data analysis in irregular domains. 
The GSO plays a key role in directing information flow across the graph, thereby enabling the creation of GFs and the design of GCNNs.

The sensitivity analysis of the GSO, which essentially involves matrix sensitivity analysis, provides an empirical insight into the system's resilience to perturbations. 
The GCNN, with its local architecture, maintains most of the properties of the graph convolutional filter, making it an ideal tool for sensitivity analysis. 
These preliminary concepts are essential for the implementation of sensitivity analysis in a graph-based context.

\noindent \textbf{Graph Basics.} 
Consider an undirected and unweighted graph~$\ccalG = (\ccalV, \ccalE, \ccalW)$, where the node set $\ccalV = \{1,\ldots,N\}$ consists of $N$ nodes, the edge set $\ccalE$ is a subset of $\ccalV \times \ccalV$, and the edge weighting function $\ccalW: \ccalV \times \ccalV \to \{ 0,1 \}$ assigns binary edges.
For an edge $(i, j) \in \ccalE$, we have $\ccalW(i, j) = \ccalW(j, i) = 1$ due to our focus on undirected and unweighted graphs. 
We define the $1$-hop neighboring set of a node $i$ as $\ccalN_i=\{j \in\ccalV:(i,j)\in\ccalE\}$, the degree of node $i$ as $d_i$, 
and the minimum degree of nodes around $i$ as $\tau_i = \min_{j \in \ccalN_i}d_j$. 

\noindent \textbf{GSO.}
The Graph Shift Operator (GSO) $\bbS \in \mbR^{N \times N}$ symbolizes the structure of a graph and guides the passage and fusion of signals between neighboring nodes.
It is often represented by the adjacency matrix $\bbA$, the Laplacian $\bbL$, or their normalized counterparts. 
These representations capture the graph's connectivity patterns, marking them indispensable tools for data analysis in both regular and irregular domains~\cite{Shuman13-EmergingGSP}.
The adjacency matrix, denoted by $\bbA$, incorporates both the weighting function and the graph topology $\ccalG$, where~$[\bbA]_{ij} = 1$ if $(i, j) \in \ccalE$ and $[\bbA]_{ij} = 0$ if $(i, j) \not \in \ccalE$.
The Laplacian matrix~$\bbL$ is defined by the adjacency matrix and a diagonal degree matrix $\bbD$. 
Specifically, $\bbL = \bbD - \bbA$, where~$\bbD = \diag(\bbA\bbone_N)$ is a diagonal matrix, and $[\bbD]_{ii} = d_i$. The value $d_i = \sum_{j\in\ccalN_i}[\bbA]_{ij}$ denotes the degree of node $i$.
Moreover, normalized versions of the adjacency and Laplacian matrices are defined as $\bbA_\textrm{n} = \bbD^{-1/2}\bbA\bbD^{-1/2}$ and $\bbL_\textrm{n} = \bbD^{-1/2}\bbL\bbD^{-1/2}$, respectively. 
These normalized versions help maintain consistency and manage potential variations in the scale of the data.

\noindent \textbf{Graph Convolutional Filter.} 
Using GSO, graph signals undergo shifting and averaging across their neighboring nodes. 
The signal on the graph is denoted by $\bbx \in \mbR^{N}$. 
Its $i$-th entry~$[\bbx]_i = x_i$ specifies the data value at the node $v_i$. 
The one time shift of graph signal is simply $\bbS \bbx$, whose value at node~$i$ is $[\bbS\bbx]_i = \sum_{j\in\ccalN_i}s_{ij}x_j$. 
After one graph shift, the value at node $i$ is given by moving a local linear operator over its neighborhood values $\{x_j\}_{j \in \ccalN}$. 
Based on the graph shifting, a graph convolutional filter $\bbh(\bbS)$ with $K$ taps is defined via polynomials of GSO and the filter weights $\bbh = \{h_k\}_{k=0}^{K}$ in the graph convolution 
\begin{equation}
    \label{eq:Smally}
    \bby = h_0\bbS^0\bbx + \cdots + h_{K}\bbS^{K}\bbx = \sum_{k=0}^Kh_k\bbS^k\bbx = \bbh(\bbS)\bbx,
\end{equation}
where $\bby$ is the filter's output and $\bbh(\bbS) = \sum_{k=0}^Kh_k\bbS^k$ is a shift-invariant graph filter with $K$ taps, and denotes the weight of local information after $K$-hop data exchanges.
The graph filter is then combined with the nonlinear activation function, forming the primary component of GCNN and contributing to its expressivity.

\noindent \textbf{Graph Perceptron and GCNN. }
A Graph Perceptron~\cite{Isufi22GFOverview} is a simple unit of transformation in the GCNN. 
The functionality of a graph perceptron can be seamlessly extended to accommodate graph signals with multiple features. 
Specifically, a multi-feature graph signal can be denoted by $\bbX = [\bbx_1, \cdots, \bbx_d] \in \reals^{N \times d}$, where $d$ signifies the number of features.
The architecture of an $L$-layer GCNN is built upon cascading multiple graph perceptrons. It operates such that the output of a graph perceptron in a preceding layer serves as the input to the graph perceptron at the subsequent layer $\ell$, where $\ell$ spans from $1$ to $L$.
We denote the feature fed to the first layer as $\bbX_0 = \bbX$.
For an $L$-layer GCNN, the graph perceptron at layer $\ell$ can be represented as
\begin{equation}
\label{eq:graphperceptron-multi}
\begin{split}
\bbY_{\ell} = \sum_{k=1}^K\bbS^k\bbX_{\ell-1}\bbH_{\ell k}, \ \
\bbX_{\ell} = \sigma_\ell \left( \bbY_{\ell} \right ).
\end{split}
\end{equation}
Here, $\bbY_{\ell}$ signifies the intermediate graph filter output, $\sigma_\ell(\cdot)$ denotes the nonlinear activation function at layer $\ell$,  and graph signals at each layer are $\bbX_{\ell}$ and $\bbX_{\ell-1}$ with sizes of $\reals^{N \times F_{\ell}}$ and $\reals^{N \times F_{\ell-1}}$, respectively, where $F_{\ell}$ denotes the number of features at the $\ell$-th layer. 
The bank of filter coefficients is represented by $\bbH = \{\bbH_{\ell k}\}_{\ell=1,\ldots,L;k=1,\ldots,K}$.
By recursively using \eqref{eq:graphperceptron-multi} until $\ell=L$, a general GCNN can be formulated as
\begin{align} \label{eq:finallayergcn}
\bbPhi(\bbX;\bbH,\bbS) = \bbX_L = \sigma(\revised{\sum_{k=1}^K} \bbS \bbX_{L-1}\bbH_{Lk}).
\end{align}
This representation captures the nature of GCNN operations, going through each layer and applying the corresponding transformation defined by the graph signal, filter coefficients, and the non-linearity function. 
This hierarchical arrangement facilitates the flow of information through successive layers, thus enabling effective learning from graph-structured data.

\section{Problem Formulation}
\label{sec:problem_formu}
A pivotal aspect of understanding the sensitivity of a GCNN is the considerations of potential alterations in the underlying graph structure. 
These alterations can be broadly construed as perturbations to the GSO, intrinsically linking to changes in the graph topology.
In the simplest form, any perturbation to the GSO can be depicted as
\begin{equation} \label{eq:generalGSOerror}
\hbS = \bbS + \bbE,
\end{equation}
where $\hbS$ signifies the perturbed GSO, $\bbS$ is the original GSO, and $\bbE$ represents the error term.
The spectral norm of this error term is denoted by
\begin{equation}
    \label{eq:gso_distance}
    d(\hbS, \bbS) = \|\hbS - \bbS\| = \| \bbE \|.
\end{equation}

Inspired by a previous work \cite{Miettinen21-ErrorEffect}, we utilize a probabilistic error model to represent graph perturbations, where each edge of the graph is subject to perturbation independently.
In this context, we primarily focus on the alterations occurring within the neighborhood of a particular node $u\in\ccalV$. 
More specifically, the perturbed neighborhood may encompass added nodes ($\ccalA_u$), deleted nodes ($\ccalD_u$), and remaining nodes ($\ccalR_u$), which ultimately leads to changes in node degree and modifications to the adjacency matrix.
We aim to quantify the sensitivity of GSO in relation to these perturbations. 
To this end, we adopt and expand upon the notation used in \cite{Kenlay21-Rewire, Kenlay2021InterpretableSB} for clarity and consistency. 

When the graph undergoes perturbations, it transforms into $\hat{\ccalG} = (\ccalV, \hat{\ccalE}, \hat{\ccalW})$, with the node set remaining unaffected.
We express degrees of node $u \in \ccalV$ in original and perturbed graphs as $d_u = \sum_j | [\bbA]_{u,j} |$ and $\hhatd_u = \sum_j|[\hbA]_{u,j}| = d_u + \delta_u$, respectively. 
Here, $\hbA$ denotes the adjacency matrix of the perturbed graph $\hat{\ccalG}$, and $\delta_u = \delta_u^+ - \delta_u^-$ is the degree change at node $u $, with $\delta_u^+ = |\ccalA_u|$ and $\delta_u^- = |\ccalD_u|$ corresponding to the number of edges added and deleted, respectively.
We will further delve into the assumptions for the error model and its effects on the GCNN's performance in the following discussion.

\subsection{Probabilistic Graph Error Model}
\begin{figure*}[t]
    \centering
    \subfloat[$\epsilon_1=0,\epsilon_2=0$]{
        \includegraphics[width = 0.24\linewidth, trim={1cm 0 1cm 0.8cm}, clip]{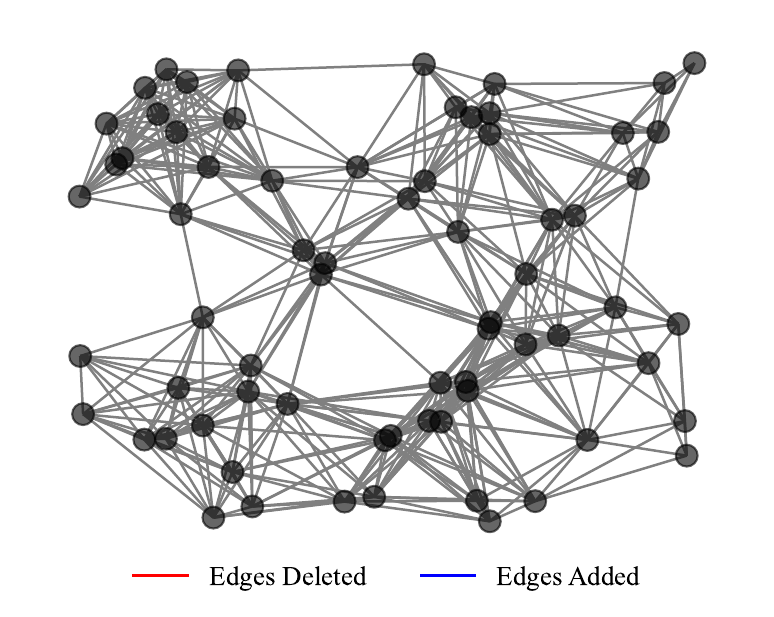}}
    \subfloat[$\epsilon_1=0.3,\epsilon_2=0$]{
        \includegraphics[width = 0.24\linewidth, trim={1cm 0 1cm 0.8cm}, clip]{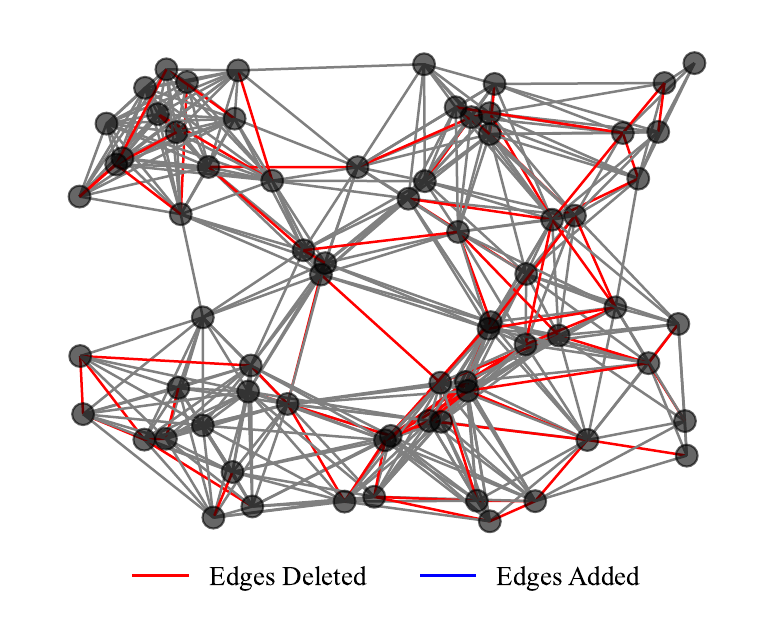}}
    \subfloat[$\epsilon_1=0,\epsilon_2=0.1$]{
        \includegraphics[width = 0.24\linewidth, trim={1cm 0 1cm 0.8cm}, clip]{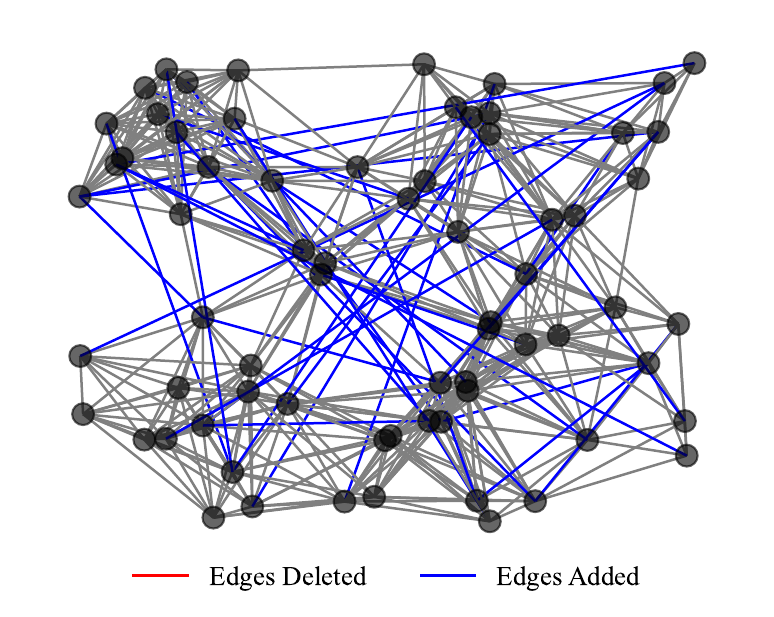}}
    \subfloat[$\epsilon_1=0.3,\epsilon_2=0.1$]{
        \includegraphics[width = 0.24\linewidth, trim={1cm 0 1cm 0.8cm}, clip]{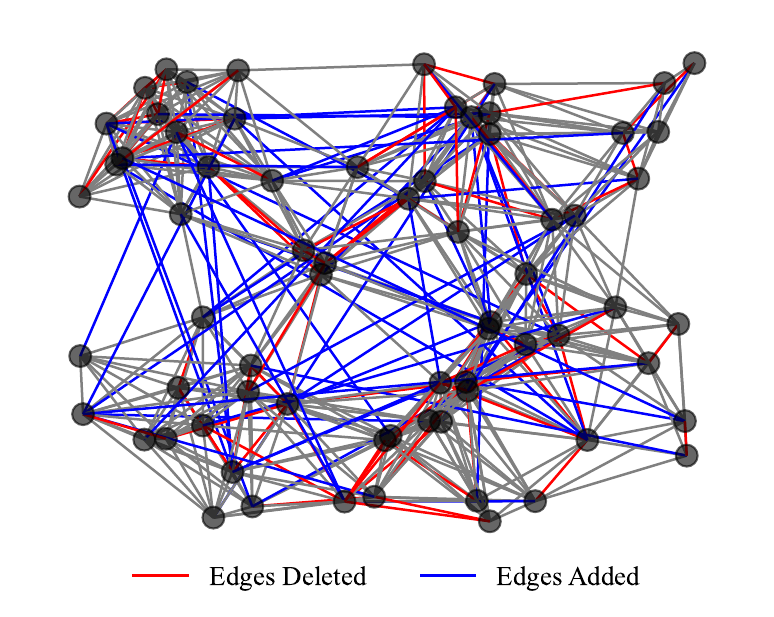}} 
    \caption{Visual representation of the probabilistic graph error model applied to a random geometric graph. 
    From left to right: (a) Original graph; (b) Graph after edge deletions ($\epsilon_1=0.3,\epsilon_2=0$); (c) Graph after edge additions ($\epsilon_1=0,\epsilon_2=0.1$); (d) Graph after both edge deletions and additions ($\epsilon_1=0.3,\epsilon_2=0.1$). 
    Deleted edges are marked in red and added edges are marked in blue.
    The transformations effectively illustrate the impact of perturbations modeled by \eqref{eq:basic_ermodel}.}
    \label{fig:ERmodelVis}
    \vspace{-3mm}
\end{figure*}

In this work, we utilize an Erd\"os-R\'enyi (ER) graph-based model for perturbations on a graph adjacency matrix, following the approach proposed in \cite{Miettinen21-ErrorEffect}. 
The adjacency matrix of an ER graph is characterized by a random $N \times N$ matrix $\mathbf{\Delta}_\epsilon$, where each element of the matrix is generated independently, satisfying $\textrm{Pr} ( [\mathbf{\Delta}_\epsilon]_{i,j} = 1 ) = \epsilon$ and $\textrm{Pr} ( [\mathbf{\Delta}_\epsilon]_{i,j} = 0 ) = 1 - \epsilon$ for all $i \neq j$. 
The diagonal elements are zero, i.e., $[\mathbf{\Delta}_\epsilon]_{i,i} = 0$ for $i = 1, \dots, N$, eliminating the possibility of self-loops.
For the sake of our analysis, we also assume that the perturbed graph $\hat{\ccalG}$ does not contain any isolated nodes, meaning that for all $u \in \ccalV$, $\hhatd_u \geq 1$.
The model can be adapted by employing the lower triangular matrix $\bbDelta_\epsilon^l$, and then defining $\bbDelta_\epsilon = \bbDelta_\epsilon^l + (\bbDelta_\epsilon^l)^\top$. 
Consequently, by specifying the error term in \eqref{eq:generalGSOerror}, the perturbed adjacency matrix of a graph signal can be expressed as
\begin{equation}\label{eq:basic_ermodel}
\hbA = \bbA - \bbDelta_{\epsilon_1}\circ\bbA + \bbDelta_{\epsilon_2}\circ(\ones_{N \times N} - \bbA),
\end{equation}
where the first term is responsible for edge deletion with probability $\epsilon_1$, and the second term accounts for edge addition with probability $\epsilon_2$. 
This error model can be conceptualized as superimposing two ER graphs on top of the original graph.
To better illustrate this model, we utilize visual aids based on a random geometric graph \cite{penrose2003random, hagberg2008networx}. 
Fig.~\ref{fig:ERmodelVis} visually represents the transition from the original graph to perturbed versions, which include the graph with only edge deletions ($\epsilon_1=0.3,\epsilon_2=0$), the graph with only edge additions ($\epsilon_1=0,\epsilon_2=0.1$), and the graph with both edge deletions and additions ($\epsilon_1=0.3,\epsilon_2=0.1$). 
Each state depicts the progressive impacts of the perturbations.

In this context, the impact of the perturbation on the degree of a given node $u \in \ccalV$ can be computed as follows. 
The effect of edge deletion is represented by $(-\bbDelta_{\epsilon_1} \circ \bbA)_u$, where each non-zero element in $\bbA_u$ has a probability of $\epsilon_1$ being deleted. 
Thus, the total number of deleted edges $\delta_u^-$ is the sum of $d_u$ independent and identically distributed (i.i.d.) Bernoulli random variables, each with a probability of $\epsilon_1$. 
Similarly, the effect of edge addition is denoted by $\left( \bbDelta_{\epsilon_2} \circ (\ones_{N \times N} - \bbA)\right)_u$, and the total number of added edges $\delta_u^+$ is the sum of $d_u^*$ i.i.d. Bernoulli random variables, each with a probability of $\epsilon_2$, where $d_u^*=N-d_u-1$.
Hence, we can express the number of deleted edges $\delta_u^-$ and the number of added edges $\delta_u^+$ as following binomial distributions:
\begin{equation}\label{eq:DeltauDistribution}
\begin{split}
\delta_u^- \sim \textrm{Bin}(d_u, \epsilon_1),\ \delta_u^+ \sim \textrm{Bin}(d_u^*, \epsilon_2),
\end{split}
\end{equation}
where $\textrm{Bin}(n, p)$ represents a binomial distribution with parameters $n$ and $p$.

\section{Expected Bound for GSO error}
\label{sec:GSOsensitivity}
\subsection{Error Bound for Unnormalized GSO Using $\ell_1$ Norm}
Building on the foundation laid by the discussion of graph structure perturbations and the proposed error model, we now outline the primary theoretical contributions of this study. 
Our focus here is to detail the probabilistic bounds that help quantify the sensitivity of the GSO to graph structure perturbations. 
We examine the case where the adjacency matrix serves as the GSO, implying $\hbS = \hbA$ and $\bbS = \bbA$.
The error model derived in \eqref{eq:basic_ermodel} can be expressed as
\begin{equation} \label{eq:basic_model}
    \bbE = \hbA - \bbA = - \bbDelta_{\epsilon_1}\circ \bbA + \bbDelta_{\epsilon_2}\circ (\ones_{N \times N} - \bbA).
\end{equation}
We can link the change in degree with the $\ell_1$ norm of error term in \eqref{eq:basic_model} as
\begin{equation}
    \|\bbE\|_1 = \max_{u \in \ccalV} \|\bbE_u\|_1,
\end{equation}
where 
\begin{equation}
    \revised{Y_u \triangleq \| \bbE_u \|_1  = |\ccalD_u| + |\ccalA_u| = \delta_u^- + \delta_u^+.}
\end{equation}
Let $Y \triangleq \max_{u \in \ccalV} Y_u$.
Since $\delta_u^-$ and $\delta_u^+$ are independent random variables, it is not appropriate to give deterministic upper bounds.
Instead, we present expected value bounds, which are better suited for analyzing the degree changes of nodes given the probabilistic nature of the model.
Our goal is to derive a closed-form expression for the expectation of the maximum node degree error, i.e.,
\begin{equation}\label{eq:expectationgeneralboundl2l1}
    \mbE[\|\bbE\|_1] = \mbE[\max_{u \in \ccalV} \|\bbE_u\|_1].
\end{equation}

The probability mass function (PMF) of $Y_u$ can be found by convolving the PMFs of $\delta_u^-$ and $\delta_u^+$, which are independent random variables.
Following binomial distributions in \eqref{eq:DeltauDistribution},
we can obtain the following PMFs
\begin{align}\label{eq:Delta_u^-and+PMF}
      \text{Pr}_{\delta_u^-}(k) & = \begin{pmatrix}d_u \\ k\end{pmatrix}\epsilon_1^k(1-\epsilon_1)^{d_u-k}, \ k=0,\ldots,d_u, \\
      \text{Pr}_{\delta_u^+}(k) & = \begin{pmatrix}d_u^*\\k\end{pmatrix}\epsilon_2^{k}(1-\epsilon_2)^{d_u^*-k},\ k = 0,\ldots,d_u^*,
\end{align}
where $d_u^* = N-d_u-1$, $\text{Pr}_{\delta_u^-}(k)$ and $\text{Pr}_{\delta_u^+}(k)$ represent the probabilities of $\delta_u^-$ and $\delta_u^+$ taking the value $k$, respectively.
Then, the PMF of $Y_u$ can be computed as
\begin{equation}
\begin{split}
    \text{Pr}_{Y_u}(k) &= \sum_{i=\max\{0, k-d_u^*\}}^{\min\{k, d_u\}} \text{Pr}_{\delta_u^-, \delta_u^+} (i, k-i)  \\ 
    &= \sum_{i=\max\{0, k-d_u^*\}}^{\min\{k, d_u\}} \text{Pr}_{\delta_u^-}(i)\text{Pr}_{\delta_u^+}(k-i),
    \label{eq:Y_uPMF}
\end{split}
\end{equation}
where $k=0,\ldots,N-1$.
Using \eqref{eq:Y_uPMF}, the cumulative distribution function (CDF) of $Y$ is computed as
\begin{align}
\text{F}_{Y}(k) & =  \text{Pr}(Y \leq k) =  \text{Pr}(\max(Y_1,\ldots,Y_N) \leq k) \notag \\
& =  \text{Pr}(Y_1 \leq k, \ldots, Y_N \leq k) = \prod_{u=1}^{N}  \text{Pr}(Y_u \leq k).
\end{align}
Given that $Y_u$ for $u\in\ccalV$ are i.i.d. and for $k=1,\ldots,N-1$, the CDFs for $Y$ and $Y_u$ are as follows
\begin{align} \label{eq:Y_CDF}
    \text{F}_{Y}(k) = \prod_{u=1}^{N} \text{F}_{Y_u}(k),\quad \text{F}_{Y_u}(k) = \sum_{j=0}^{k} \text{Pr}_{Y_u}(j).
\end{align}
With the PMF of $Y$ taking on a specific value $k$ being $\text{Pr}_{Y}(k) = \text{F}_{Y}(k) - \text{F}_{Y}(k-1)$, the expectation of $Y$ can be represented as
\begin{align}\label{eq:Y_Exp}
\mbE[Y] & = \sum_{k=1}^{N-1}k\text{Pr}_{Y}(k) = \sum_{k=1}^{N-1}k \left[ \text{F}_{Y}(k) - \text{F}_{Y}(k-1) \right],
\end{align}
which provides a closed-form expression for $\mbE[Y] = \mbE[\|\bbE\|_1]$.
The variance of $Y$ can also be given as 
\begin{equation}
    \Var[Y] = \Var[\|\bbE\|_1] = \mbE[Y^2] - (\mbE[Y])^2,
    \label{eq:Y_Var}
\end{equation}
where $\mbE[Y^2] = \sum_{k=1}^{N-1}k^2\text{Pr}_{Y}(k) $.

\subsection{Bridging $\ell_1$ and $\ell_2$ Norms in GSO Analysis}
In the analysis of graph-structured data, the spectral norm ($\ell_2$ norm), is often employed to quantify the graph spectral error. 
While \cite{Wang22-Eusipco} did furnish a spectral error bound for the GSO, the need for a more refined and interpretable bound persists to enable more comprehensive analyses.
Following the approach of \cite{Kenlay2021InterpretableSB}, this study uses the $\ell_1$ norm and assumes that the error matrix $\bbE$ is fixed.
The proposed approach of bounding $\|\bbE\|$ is based on assumptions of an undirected graph and perturbation $\bbE = \bbE^\top$. 
Using inequalities $\|\bbE\|^2 \leq \|\bbE\|_1 \|\bbE\|_\infty$ \cite[Section 2.3.3]{golub2012matrix} and the fact that in our case $\|\bbE\|_1  = \|\bbE\|_\infty$, the $\ell_2$ norm can be bounded by the $\ell_1$ norm
\begin{equation}\label{eq:l2smallerthanl1}
    \|\bbE\| \leq \|\bbE\|_1 = \max_{u \in \ccalV} \|\bbE_u\|_1.    
\end{equation}
The entries in the error matrix $\bbE$ of equation \eqref{eq:basic_model} are random variables. 
As such, it is challenging to derive a deterministic bound for \eqref{eq:l2smallerthanl1} that is both tight and generalizable. 
In contrast, an expected bound 
\begin{equation} \label{eq:expectationgeneralboundl2l1}
    \mbE[ \|\bbE\| ] \leq \mbE[ \|\bbE\|_1 ] = \mbE[ \max_{u \in \ccalV} \|\bbE_u\|_1],   
\end{equation}
provides a more reasonable estimate of the true behavior of the error matrix, as it takes into account the distribution of the random variables, as well as the structural changes of the perturbed graph.
Thus, we have the following theorem.
\begin{theorem}\label{thm:Case1Adj}
In the context of the probabilistic error model \eqref{eq:basic_model}, let GSO be adjacency matrix $\bbS = \bbA$, and perturbed GSO be $\hbS=\hbA$, then, a closed-form expression for the upper bound on the expectation of the GSO distance is given by
\begin{equation} \label{eq:l1boundthm1}
    \mbE\left[ d(\hbS,\bbS) \right] \leq \mbE[Y],
\end{equation}
where $\mbE[Y]$ is computed using \eqref{eq:Y_Exp}, \eqref{eq:Y_CDF}, and \eqref{eq:Y_uPMF}.
\end{theorem}
Theorem \ref{thm:Case1Adj} provides a closed-form expression for the upper bound, which are explicitly dependent on the parameters $(\epsilon_1, \epsilon_2)$ of the probabilistic error model in \eqref{eq:basic_model}.
Using a loose upper bound proposed in~\cite{aven1985upper},
we can bound \eqref{eq:l1boundthm1} as
\begin{align}
     \mbE[Y] & \leq \max_{1\leq u\leq N} (d_u\epsilon_1 + d_u^*\epsilon_2) \nonumber \\
     & + \sqrt{\frac{N-1
     }{N}\sum_{u=1}^N \big(d_u\epsilon_1(1-\epsilon_1) + d_u^* \epsilon_2(1-\epsilon_2)\big) }. \label{eq:bound4l1bound}
 \end{align}
We note that \eqref{eq:bound4l1bound} showcases how our bound in Theorem~\ref{thm:Case1Adj} is parameterized by the probabilities of adding and deleting edges.
Thus, Theorem~\ref{thm:Case1Adj} precisely captures the resulting structural changes induced by the probabilistic error model, unlike the generic spectral bound in \cite{Wang22-Eusipco}, which overlooks specific structural changes on the perturbed GSO.
\begin{remark}[Why not use $\ell_2$ norm?]
The spectral bounds derived using the $\ell_2$ norm, as presented in \cite{Wang22-Eusipco}, cannot fully capture the specific structural changes to the GSO from perturbations, especially in graphs with unique properties like degree distribution or sparsity. 
Focused on worst-case scenarios, these bounds lead to overestimations, rendering them looser and less applicable to particular graph types.
The $\ell_1$ norm is preferred over the $\ell_2$ norm for providing an upper bound because it reveals the impact of structural changes denoted by $\bbDelta_{\epsilon_1}$ and $\bbDelta_{\epsilon_2}$ in~\eqref{eq:basic_model}, whereas the $\ell_2$ norm absorbs these structural changes into the overall spectral change, making it more challenging to derive a tight bound.
\end{remark}

\subsection{Error Bound for Normalized GSO}
In this context, the GSO is considered as the normalized version of the adjacency matrix, i.e., $\bbS = \bbA_\textrm{n}$. 
The entries of the normalized adjacency matrix are as follows, $[\bbA_\textrm{n}]_{u,v} = \frac{1}{\sqrt{d_ud_v}}$ if $(u,v)\in\ccalE$, and $[\bbA_\textrm{n}]_{u,v} = 0$ if $(u,v)\not\in\ccalE$.
In \cite{Kenlay2021InterpretableSB}, a closed form for $\|\bbE_u\|_1$ is proposed
\begin{equation}\label{eq:KenlayGeneralBound}
\begin{split}
\|\bbE_u\|_1 =  \sum_{v \in \ccalD_u}\dfrac{1}{\sqrt{d_ud_v}} + \sum_{v \in \ccalA_u}\dfrac{1}{\sqrt{\hhatd_u\hhatd_v}} \\ 
+ \sum_{v \in \ccalR_u}\left|\dfrac{1}{\sqrt{d_ud_v}} - \dfrac{1}{\sqrt{\hhatd_u\hhatd_v}}\right|,
\end{split}
\end{equation}
where $\hhatd_u$ and $\hhatd_v$ denote the degrees of node $u$ and $v$ after perturbation.
However, the assumption in \cite{Kenlay2021InterpretableSB} states that the degree alteration $\hhatd_v$ should not exceed twice the initial degree, i.e., $\hhatd_v\leq 2d_v, v \in \{ \ccalN_u \cup {u} \}$.
This restriction is not needed in our work.
Following the error model in \eqref{eq:basic_ermodel}, this limitation could easily be breached with an increased probability of edge addition $\epsilon_2$.
We start with the following lemma.
\begin{lemma}
\label{lmm:EuNormalized}
    Let $\bbE_u$ be defined as in \eqref{eq:KenlayGeneralBound}, then its $\ell_1$ norm is bounded by a random variable $Z_u$
    \begin{align}\label{eq:Z_u}
        \|\bbE_u\|_1 \leq Z_u = Z_{u,1} +  Z_{u,2},
    \end{align}
    where $Z_u$ is defined as the sum of $Z_{u,1}=\sqrt{d_u/\tau_u}$ and $Z_{u,2}=\sum_{v \in \ccalA_u\cup\ccalR_u}\frac{1}{\sqrt{(d_u+\delta_u^+ -\delta_u^-)(d_v + \delta_v^+ -\delta_v^-)}}$, $d_u$ is the degree of node $u$, $\tau_u$ is the minimum degree of neighboring nodes of $u$, and $\delta_u^-, \delta_u^+, \delta_v^-, \delta_v^+$ are random variables with binomial distributions as $\delta_u^- \sim \textnormal{Bin}(d_u, \epsilon_1), \delta_u^+ \sim \textnormal{Bin}(d_u^*, \epsilon_2), \delta_v^- \sim \textnormal{Bin}(d_v, \epsilon_1), \delta_v^+ \sim \textnormal{Bin}(d_v^*, \epsilon_2)$ for $u\in\ccalV$ and $v \in \ccalA_u\cup\ccalR_ u$, where $d_u^* = N-d_u-1$ and $d_v^* = N-d_v-1$.
\end{lemma}
\begin{proof}
    See Appendix \ref{lmm:EuNormalized_proof}.
\end{proof}

Let 
\begin{align}\label{eq:Z_definition}
Z \triangleq \max_{u\in\ccalV}Z_u,    
\end{align}
and note that $Z_u$ and $Z$ are discrete random variables.
While the binomial random variables and degrees in the expression for $Z$ are assumed to be i.i.d., the inherent nonlinearity and high-dimensionality in the function, along with the complexity introduced by the maximization operation over all nodes, pose challenges for deriving an analytical expression for $\mbE[Z]$. 
Furthermore, the expectation of a maximum of random variables often lacks a simple closed form with only bounds often being derivable, not the exact value.
On the other hand, Monte Carlo simulations provide an efficient alternative for estimating $\mbE[Z]$, which is given as
\begin{equation} \label{eq:ZMCexpectation}
    \mu_Z \triangleq \mathbb{E}[Z]  \approx  \frac{1}{N_{\textrm{samp}}} \sum_{i=1}^{N_{\textrm{samp}}}Z_{(i)} = \hat \mu_Z,
\end{equation}
where $Z_{(i)}$ represents the outcome from the $i$-th Monte Carlo trial.
Thus, for the normalized GSO, we have the following proposition as the counterpart of Theorem~\ref{thm:Case1Adj}.
\begin{proposition}\label{prop:Case2NorAdj}
    In the context of the probabilistic error model \eqref{eq:basic_model}, let GSO be normalized adjacency matrix $\bbS = \bbA_\textrm{n}$, and perturbed GSO being $\hbS=\hbA_\textrm{n}$.
    Then, an upper bound on the expectation of the GSO distance is given by
\begin{equation}
    \mbE\left[ d(\hbS,\bbS) \right] \leq \mbE[Z], 
    \label{eq:oldthm2;case2adj}
\end{equation}
where $\mbE[Z]$ is computed using \eqref{eq:ZMCexpectation}, \eqref{eq:Z_definition}, and Lemma \ref{lmm:EuNormalized}.
\end{proposition}
The upperbound provided in Proposition~\ref{prop:Case2NorAdj} focuses specifically on normalized adjacency matrices. 
This result complements the analysis for the unnormalized case.
\revised{
We note that the bound for normalized GSO is not an approximation or an empirical estimation; it presents a theoretical upperbound.
The only difference between the bound in Proposition~\ref{prop:Case2NorAdj} and the bound in Theorem~\ref{thm:Case1Adj} is the computation.
As for the bound in Theorem~\ref{thm:Case1Adj} (unnormalized case), $\mbE[Y]$ has a closed-form expression; while for computing the bound in Proposition~\ref{prop:Case2NorAdj} (normalized case) $\mbE[Z]$, we use Monte Carlo simulations.
}

\section{GCNN Sensitivity}
\label{sec:GCNStability}
\subsection{Graph Filter Sensitivity Analysis}
\label{subsec:_gfsenanalysis}
The sensitivity of graph filters is a critical aspect that follows logically from the preceding discussion on the expected bounds of GSO errors. 
Having extensively delved into the properties of GSO perturbations, we now turn our attention to the graph filters. 
Graph filters, being polynomials of GSOs, inherit the perturbations in the graph structure, manifesting as variations in filter responses. 

The sensitivity of a graph filter to perturbations in the GSO is captured by the theorem below, which establishes a bound on the error in the graph filter response due to perturbations in the GSO and the filter coefficients.
\begin{theorem}[Graph filter sensitivity]
    \label{thm:gfdistance}
    Let $\bbS$ and $\hbS$ be the GSO for the true graph $\ccalG$ and the perturbed graph $\hat{\ccalG}$, respectively. 
    \revised{The distance between polynomial graph filters} 
    $\bbh(\bbS) = \sum_{k=0}^{K}h_k\bbS^k$ and $\bbh(\hbS) = \sum_{k=0}^{K}h_k\hbS^k $ is defined as 
    \begin{equation}\label{eq:filter_distance}
        d \big( \bbh(\hbS), \bbh(\bbS) \big) = \| \bbh(\hbS) - \bbh(\bbS) \|.
    \end{equation}
    The expectation of filter distance \eqref{eq:filter_distance} is bounded as
    \begin{equation}\label{eq:filterstability}
        \mbE \left[ d \big( \bbh(\hbS), \bbh(\bbS) \big) \right] 
        \leq 
        \sum_{k=1}^{K} k  |h_k| \left( \lambda_{k} \mathbb{E}[\|\mathbf{E}\|] + \zeta_k \right),
    \end{equation}
    where $\lambda_{k} \triangleq \mbE[\lambda^{k-1}]$, $\zeta_k \triangleq \Cov[\|\bbE\|,\lambda^{k-1}]$, and $\lambda = {\max} \{ \| \hbS \|, \| \bbS \| \}$ denotes the largest of the maximum singular values of two GSOs.
\end{theorem}
\begin{proof}
See Appendix \ref{apdx:gfdistance_proof}.
\end{proof}
%
\revised{Theorem \ref{thm:gfdistance} reveals that the expected graph filter distance is linearly bounded by the expected GSO distance, $\mbE \left[ \|\bbE \| \right]$, if the sufficient condition $\lambda = \|\bbS\|$ is met.}
This bound is influenced by: the filter degree $K$, the maximum singular value $\lambda$ of GSOs, and the filter coefficients $\{h_k\}_{k=1}^K$. 
The theorem indicates that higher order graph filters are likely to exhibit greater instability.
In Section~\ref{sub,sec:gf_sensitivity_test}, we present a supporting experiment, specifically for low-pass graph filters with the unnormalized GSO, $\bbS = \bbA$.

\subsection{GCNN Sensitivity Analysis}
\label{subsec:_gcnnsenanalysis}
Based on the sensitivity analysis of graph filter, we extend this study to the sensitivity analysis of the general GCNN.
Instead of meticulously quantifying the specifics of each perturbed graph, we propose a probabilistic boundary that captures the potential magnitude of graph perturbations and more insightful assessment of the system's sensitivity to graph perturbations.
We present the following theorem to exemplify this approach, encapsulating the sensitivity of a general GCNN to GSO perturbations.
\begin{theorem}[GCNN Sensitivity] \label{thm:gcnsensitivity}
For a general GCNN under the probabilistic error model \eqref{eq:basic_model}, the expected difference of outputs at the final layer $L$ is given as
\begin{align}
\mbE\left[ \left\| \hbX_{L} - \bbX_{L} \right\| \right] 
\leq C_{\sigma_L} B_L \mbE\left[\|\bbE\| \right] + C_{\sigma_L}D_L,
\end{align}
where $C_{\sigma_\ell}$ represents the Lipschitz constant for the nonlinear activation function used at layer $\ell$, for $\ell=1,\ldots,L$, $B_\ell$ and $D_\ell$ for $\ell=1$ and then for $\ell=2,\ldots,L$ are defined as follows
\begin{equation}\label{eq:thm3_bounds}
    \begin{split}
        & B_1 = \sum_{k=1}^K k \lambda_{k} \|\bbX_0\| \|\bbH_{1k}\|,
        D_1 = \sum_{k=1}^K k\zeta_k \|\bbX_0\| \|\bbH_{1k}\|, \\
        & B_\ell = \sum_{k=1}^K \left( \lambda_{k+1}  C_{\sigma_{\ell-1}} B_{\ell-1} + k\lambda_k\|\bbX_{\ell-1}\| \right) \|\bbH_{\ell k}\|, \\
        & D_\ell = \sum_{k=1}^K \left( \mu_{k,\ell-1} + \lambda_k C_{\sigma_{\ell-1}} D_{\ell-1} + k \zeta_k\|\bbX_{\ell-1}\| \right) \|\bbH_{\ell k}\|,
    \end{split}
\end{equation}
with constant $\mu_{k,\ell-1} \triangleq \sqrt{\Var[\|\hbX_{\ell-1} - \bbX_{\ell-1}\|]\Var[\lambda^k]}$, and $\lambda_{k}$ and $\zeta_k$ in Theorem~\ref{thm:gfdistance}, for $k=1,\ldots,K$. 
\end{theorem}
\begin{proof}
    See Appendix \ref{apdx:gcnsensitivity_proof}.
\end{proof}
\noindent In Theorem~\ref{thm:gcnsensitivity}, we use recursive bounds containing inter-layer features to simplify the formulation.
Note that these inter-layer features $\{\bbX_{\ell-1}, \hbX_{\ell-1}\}_{\ell=2}^L$ 
can be explicitly computed by the initial input feature $\bbX_0$, both original and perturbed GSOs $(\bbS, \hbS)$, GCNN parameters (number of layers $L$ and graph shift $K$, network's learned weights $\{\bbH_{\ell k}\}$, and activation functions $\sigma(\cdot)$).
The derivation process employs induction.
For the first layer $\ell=1$, we have $\bbX_1 = \sigma_1(\sum_{k=1}^K\bbS^k\bbX_0\bbH_{1k})$ and $\hbX_1 = \sigma_1(\sum_{k=1}^K\hbS^k\bbX_0\bbH_{1k})$;
for the second layer $\ell=2$, the features are $\bbX_{2} = \sigma_2 ( \sum_{k=1}^K\bbS^k\bbX_1\bbH_{2 k})$ and $\hbX_{2} = \sigma_2 (\sum_{k=1}^K\hbS^k\hbX_1\bbH_{2k})$; 
by induction, for the $\ell-1$th layer, we have
\begin{equation}
    \begin{split}
        \bbX_{\ell-1} & = \sigma_\ell \left(\sum_{k=1}^K\bbS^k\bbX_{\ell-2}\bbH_{\ell-1, k}\right), \\
        \hbX_{\ell-1} & = \sigma_\ell \left(\sum_{k=1}^K\hbS^k\hbX_{\ell-2}\bbH_{\ell-1, k}\right).
    \end{split}
\end{equation}

Theorem~\ref{thm:gcnsensitivity} forms the bedrock of our analysis, quantifying how GCNNs respond to graph perturbations, which is described by a linear relationship at each layer.
The sensitivity of multilayer GCNN to perturbations can be represented by a recursion of linearity.
For multilayer GCNN, its expected output difference is controlled by: \textit{(i)} the input feature, \textit{(ii)} the GSO, error model parameters, \textit{(iii)} Lipschitz constants of activation functions, and \textit{(iv)} GCNN weights.
We note that, choosing activation functions with more conservative Lipschitz constants can possibly improve the stability of GCNNs by imposing more constraints on the recursion.
However, this may suppress the performance of a neural network, as noted in~\cite{ohayon2023perceptionrobustness}.
Our sensitivity analysis framework is generic, allowing for simplifications such as assuming a unit Lipschitz constant and normalized input features, as suggested in~\cite{Kenlay21-Rewire}.
However, these simplifications do not indicate that the GCNN sensitivity is unaffected by the Lipschitz constant or input features.
This layered analysis also enables an understanding of how perturbations propagate through GCNN layers, impacting the overall performance.
Additionally, Theorem~\ref{thm:gcnsensitivity} does not restrict the scale of graph perturbations, which is a typical restriction in the existing literature.

Within the evasion attack context, where the focus is on learned representations, we demonstrate the following property: given that the GSO error is bounded as in Theorem~\ref{thm:Case1Adj} and Proposition~\ref{prop:Case2NorAdj}, the linear bound of each layer of GCNN (illustrated in Subsection~\ref{sub,sub:GCNN_Linearity_Corroboration}) permits the network's stability against perturbation as long as the graph error remains within the bound.
In Subsection~\ref{sub,sub:GCNN_Accuracy_Corroboration}, we show that multilayer GCNN is stable by showing its finite responses to large scale perturbations, even under notable declines in accuracy.

\subsection{Specifications for GCNN variants}
Building upon sensitivity analysis Theorem \ref{thm:gcnsensitivity}, our discussion now evolves towards two specific GCNN variants - GIN~\cite{Xu19-GIN} and SGCN~\cite{Li19-originalSGCN, Wu19-SGCN}.
\revised{They apply different GSOs for feature propagation.}
In GIN, the GSO for each layer is chosen as a partially augmented unnormalized adjacency matrix; in SGCN, the GSO is chosen as a normalized augmented adjacency matrix.
This choice is made to align with the discussions on tight GSO bounds in Section \ref{sec:GSOsensitivity}.
By focusing on GIN and SGCN, we are essentially extending our theoretical understanding to practical and real-world applications.

\subsubsection{Specification for GIN}
\label{sec,subsub:GIN}
The GIN is designed to capture the node features and the graph structure simultaneously. 
The primary intuition behind GIN is to learn a function of the feature information from both the target node and its neighbors, which is related to the Weisfeiler-Lehman (WL) graph isomorphism test \cite{Weisfeiler68-WLTest}. 
The chosen GSO for GIN is $\bbS = \bbA + (1+\varepsilon)\bbI$, where the learnable parameter $\varepsilon$ preserves the distinction between nodes in the graph that are connected differently, and prevents GIN from reducing to a WL isomorphism test. 

Given the GSO above, only the first order term with $K=1$ in \eqref{eq:Smally} is kept, and the intermediate output of such graph filter is $\bby = \bbS\bbx$.
A node Multilayer Perceptron (MLP) $\bbh_{\bbTheta}$ is then applied to the filter's output as $\bbh_{\bbTheta}(\bby)$.
Assuming the inner MLP has two layers in each GIN layer, a single-layer GIN ($L=1$) can be represented as
 \begin{align}\label{eq:GIN_singlelayer}
    \bbX_{L} = \sigma_{L 2} ( \sigma_{L 1} ( \bbS \bbX_{L-1} \bbW_{L1} + \bbB_{L1} )\bbW_{L 2} + \bbB_{L2} ),
\end{align}
where $\left(\bbW_{L 1}, \bbB_{L1}, \sigma_{L 1}(\cdot)\right)$ are weight matrix, bias matrix, and nonlinearity function in the first layer of the MLP, and $\left(\bbW_{L 2}, \bbB_{L2}, \sigma_{L2}(\cdot)\right)$ are weight matrix, bias matrix, and nonlinearity function in the second layer of the MLP.
Then, we provide the following corollary.
\begin{corollary}[The sensitivity of single-layer GIN]\label{corr:GIN_MLP}
    For the single-layer GIN \revised{($L=1$)} in~\eqref{eq:GIN_singlelayer} under the probabilistic error model~\eqref{eq:basic_model}, the expected difference of outputs because of GSO perturbations is given as 
    \begin{align}
        \mbE \left[\| \hbX_{L} - \bbX_{L} \|\right] \leq \xi \mbE\left[\| \bbE \|\right],
    \end{align}
    with constant
    \begin{align}
        \xi = C_{\sigma_{L 2}} C_{\sigma_{L 1}}  \|\bbW_{L2}\| \|\bbW_{L1}\| \|\bbX_{L-1}\|,
    \end{align}
    where $\bbX_{L-1} = \bbX_0$ is the input feature.
\end{corollary}
\begin{proof}
    See Appendix~\ref{apdx:ginpercepsensitivity_proof}.
\end{proof}
Corollary \ref{corr:GIN_MLP} shows a linear dependency between the output difference of a single-layer GIN and GSO perturbations.
In GIN, node vector transformations by MLP contribute significantly to network's expressivity.
\revised{Under evasion attacks, with Corollary~\ref{corr:GIN_MLP}, the analysis of these transformed node representations is straightforward.}

%
\subsubsection{Specification for SGCN}
\label{sec,subsub:SGCN}
The SGCN is a streamlined model, developed by aiming to simplify a multilayered GCNN through the utilization of an affine approximation of graph convolution filter and the elimination of intermediate layer activation functions.
The GSO chosen for SGCN is $\bbS = \tbD^{-1/2}\tbA\tbD^{-1/2}$, where $\tbA = \bbA + \bbI$ is the augmented adjacency matrix and $\tbD$ is the corresponding degree matrix of the augmented graph.

\revised{Given the normalized augmented GSO, the node degrees~$d_u, u=1,\ldots,N$ are redefined based on the augmented GSO,~specifically, they are incremented by $1$ compared to their values in the non-augmented version.}
This streamlined model simplifies the structure of a vanilla GCN \cite{Kipf17-vanillaGCN} by retaining a single layer and the $K$th order GSO in \eqref{eq:Smally}, so the output of the filter is $ \bby = h_{K}\bbS^{K}\bbx$.
Note that for a SGCN, the maximum number of layers is $L=1$.
Consequently, the output of a single-layer SGCN using a linear logistic regression layer is represented as
\begin{equation}\label{eq:SGCN_singlelayer}
    \bbX_L = \sigma_L (\bbS^K \bbX \bbH_K),
\end{equation}
and thus, we can easily give the following corollary.
\begin{corollary}[The sensitivity of SGCN]\label{corr:SGCN_sensitivity}
    For the SGCN in~\eqref{eq:SGCN_singlelayer} under the probabilistic error model~\eqref{eq:basic_model}, the expected difference of outputs because of GSO perturbations is given as 
    \begin{align}
        \mbE\left[ \| \hbX_L - \bbX_L \| \right] \leq C_{\sigma_L} B_L \mbE\left[\|\bbE\|\right] + C_{\sigma_L} D_L,
    \end{align}
    where $B_L = \lambda_{K} \|\bbX\| \|\bbH_{K}\|$, $D_L = K\zeta_K \|\bbX\| \|\bbH_{K}\|$, $\lambda_{K}$ and $\zeta_K $ are defined in Theorem~\ref{thm:gcnsensitivity}.
\end{corollary}
With Corollary \ref{corr:SGCN_sensitivity}, we conclude that the sensitivity analysis for SGCN is a specification for the general form of a multilayer GCNN.

\section{Numerical Experiments}
\label{sec:NumExperiments}
\begin{figure}[!t]
    \centering 
    \includegraphics[width = 0.98\linewidth]{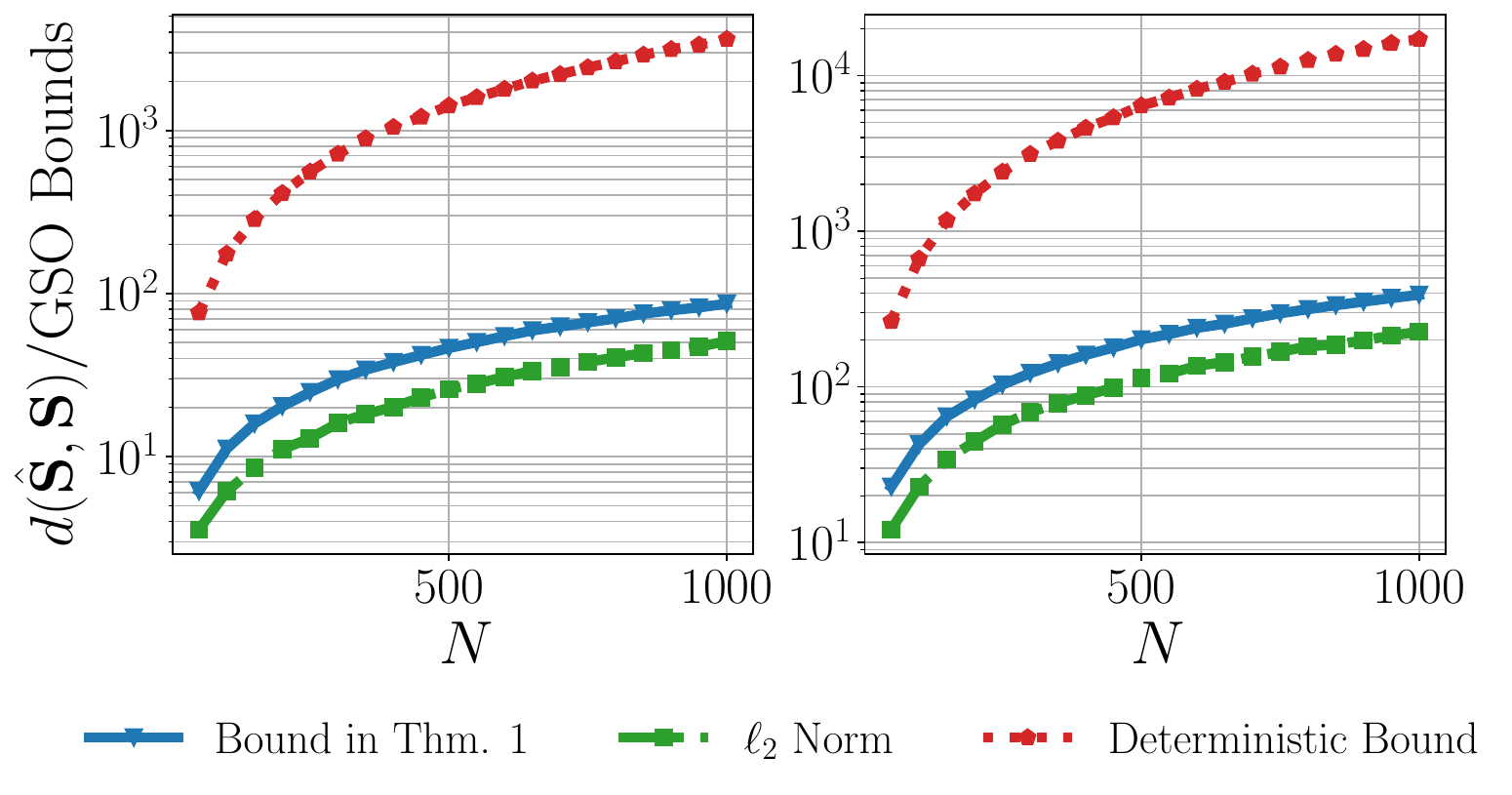}
    \caption{Comparative analysis of our bound in Theorem~\ref{thm:Case1Adj}, the deterministic bound in Theorem 2 of~\cite{Wang22-Eusipco}, and the empirical GSO distance in $\ell_2$ norm.}
    \label{fig:theo_bdcrr_case0}
\end{figure}

\begin{figure}[!t]
    \centering
    \includegraphics[width = 0.98\linewidth]{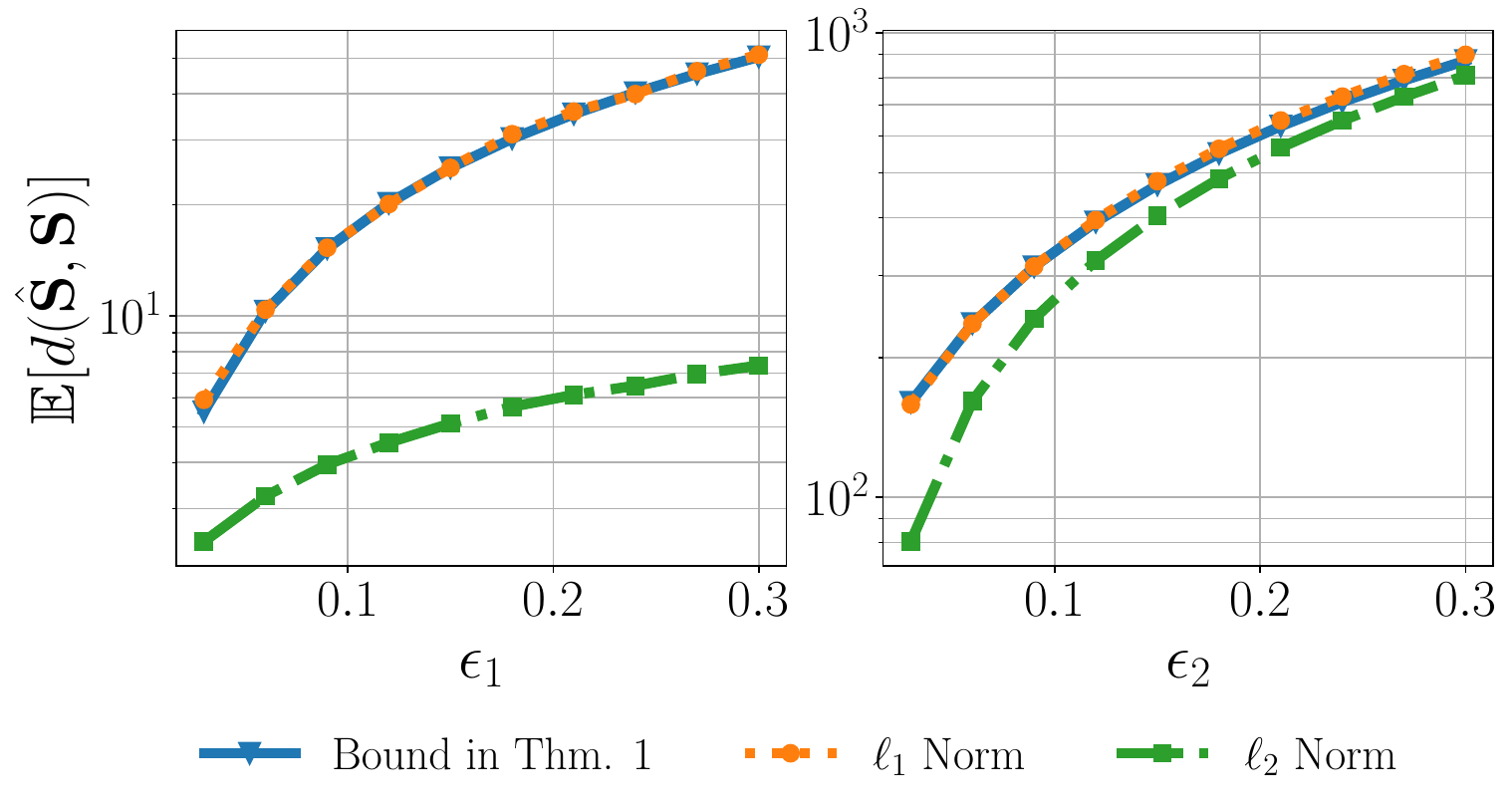}
    \caption{
    Theoretical (bound in Thm.~\ref{thm:Case1Adj}) and empirical bounds ($\ell_1$ and $\ell_2$ norms) for the perturbed Cora graph with $\bbS = \bbA$. 
    Left panel: varying $\epsilon_1$ with fixed $\epsilon_2=0$.
    Right panel: varying $\epsilon_2$ with fixed $\epsilon_1=0.5$.}
    \label{fig:theo_bdcrr_case1}
\end{figure}

\begin{figure}[!t]
    \centering 
    \includegraphics[width = 0.98\linewidth]{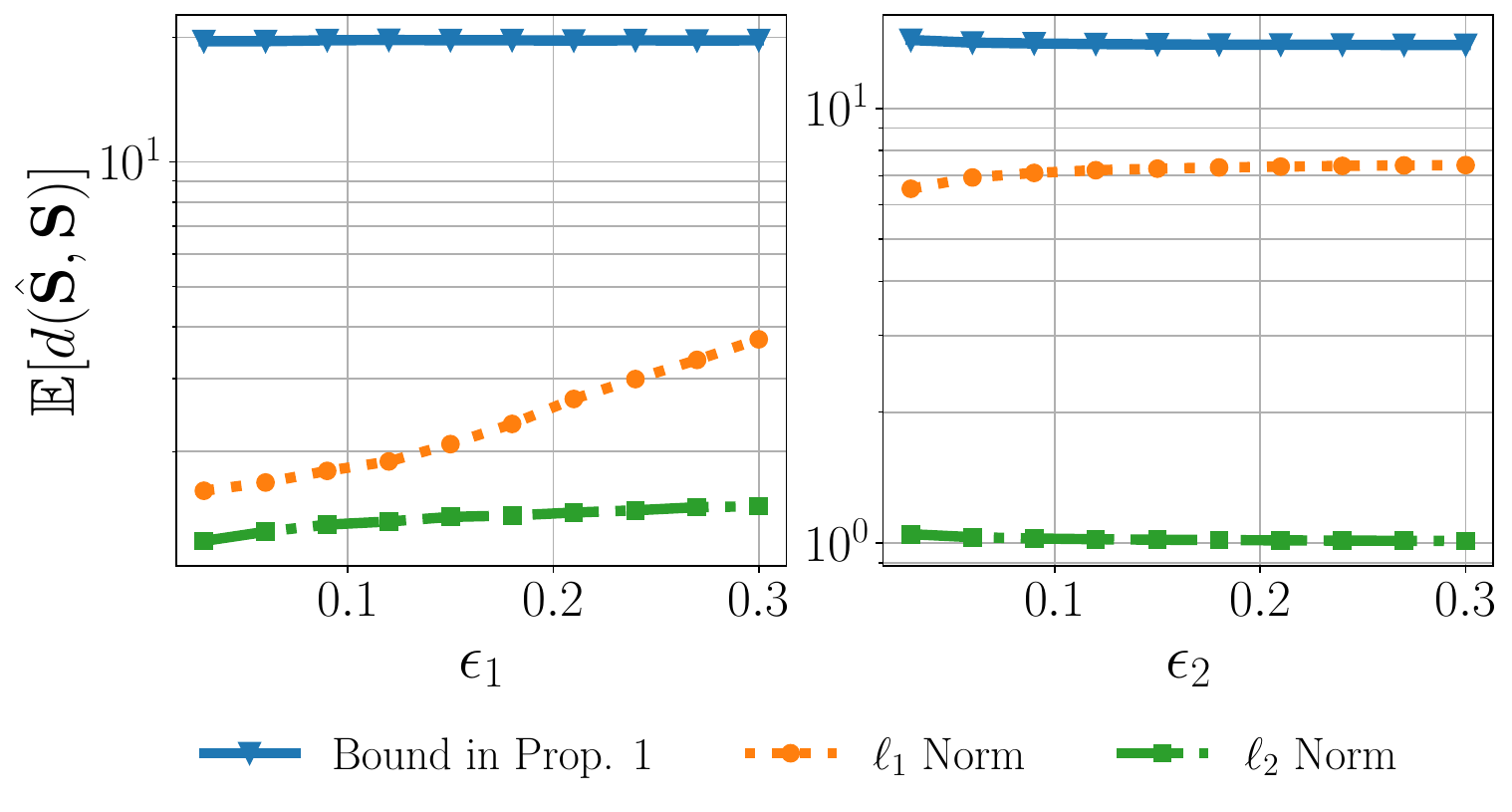}
    \caption{Theoretical (bound in Prop.~\ref{prop:Case2NorAdj}) and empirical bounds ($\ell_1$ and $\ell_2$ norms) for the perturbed Cora graph with $\bbS = \bbA_\textrm{n}$, under identical $(\epsilon_1, \epsilon_2)$ settings as Fig.~\ref{fig:theo_bdcrr_case1}.}
    \label{fig:theo_bdcrr_case2}
\end{figure}

\begin{figure*}[!t]
    \centering
    \includegraphics[width = 0.90\linewidth]{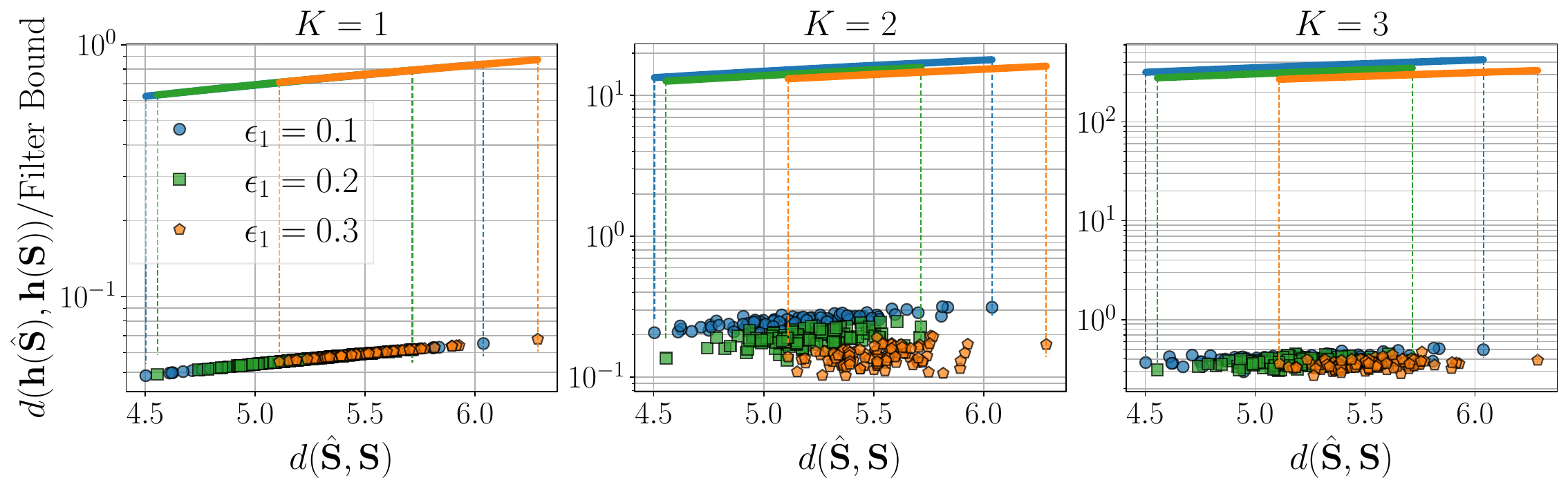}
    \caption{
    Comparison of Theorem~\ref{thm:gfdistance} bounds (solid lines) and empirical GF distances (scatter points)  with fixed $\epsilon_2=0.05$ and varying $\epsilon_1$ in $[0.1,0.2,0.3]$.}
    \label{fig:gf_sensitivity}
\end{figure*}

\begin{figure}[!t]
    \centering 
    \includegraphics[width = 0.98\linewidth]{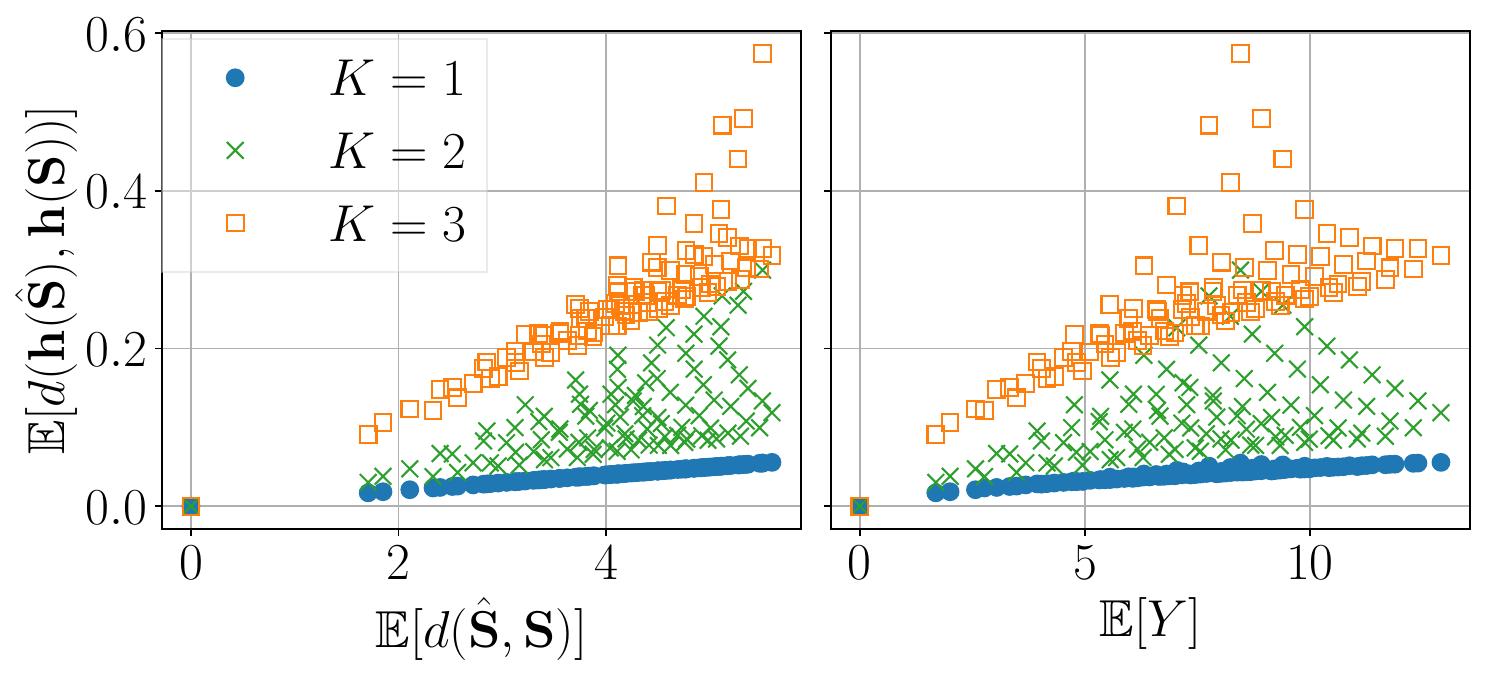}
    \caption{
    Expected GF output differences under increasing GF order and perturbation budget, illustrating intensified sensitivity along increased GF order and the alignment of Theorem \ref{thm:Case1Adj} bound with empirical GSO distance trends.
    }
    \label{fig:gf_sensitivity2}
\end{figure}

\subsection{Theoretical GSO Bound Corroboration}
\label{sub,sec:theoboundcoo}

\subsubsection{Synthetic graph}
We consider a two-group planted partition model (PPM), which is a special case of the stochastic block model.
Parameters are set with in-group probability to $p_{\rm in} = 0.8$, and between-group probability to $p_{\rm bet} = 0.5$.
The GSO is set as the unnormalized adjacency matrix $\bbS = \bbA$.
We perturb the PPM graph using the probabilistic error model~\eqref{eq:basic_ermodel} with two scales of perturbation budgets:
\begin{itemize}
    \item Small-scale perturbation (see Fig.~\ref{fig:theo_bdcrr_case0}, left panel): With $\epsilon_1 = 0.1$ and $\epsilon_2 = 0.01$, the graph is slightly altered, preserving its fundamental structure.
    \item Large-scale perturbation (see Fig.~\ref{fig:theo_bdcrr_case0}, right panel): With $\epsilon_1 = 0.5$ and $\epsilon_2 = 0.1$, the graph is under significant structural changes.
\end{itemize}
We carry out 101 Monte Carlo trials for varying graph sizes (ranging from $50$ to $1000$, in $50$-node increments). 
These simulations evaluate the expected bound from Theorem~\ref{thm:Case1Adj} and the deterministic bound from~\cite[Theorem 2]{Wang22-Eusipco} in relation to graph size. 
Comparisons with empirical GSO distances~\eqref{eq:gso_distance}, calculated using the $\ell_2$ norm, reveal that our expectation bound is consistently tighter than the deterministic counterpart from~\cite{Wang22-Eusipco}. 
This difference arises due to the consideration of degree changes and the probabilistic nature of our bound, as opposed to the worst-case scenario focus of the deterministic bound. 
Another observation is the increased bound magnitude correlating with higher perturbation budgets, as depicted in Fig.~\ref{fig:theo_bdcrr_case0}. 
Both bounds remain valid, even in high perturbation scenarios, underscoring the robustness of our theoretical frameworks.

\subsubsection{Real-life graph}

We utilize the undirected Cora citation graph \cite{Sen_Namata_Bilgic_Getoor_Galligher_Eliassi-Rad_2008}, which comprises $N = 2708$ nodes, and $C = 7$ classes.
Assuming the undirected nature of the underlying graph, we modify the original Cora graph from a directed to an undirected one.
The undirected Cora graph has $|\ccalE| = 5278$ edges.
We ascertain the evolution of our theoretical bounds against an increase in edge deletion probability $\epsilon_1$ and edge addition probability $\epsilon_2$. 
These alterations are systematically tracked along with using the $\ell_1$ and $\ell_2$ norms of the discrepancy between the original and perturbed graphs.

The range of $\epsilon_1$ and $\epsilon_2$ is set within $[3\times10^{-2}, 3\times10^{-1}]$, increasing in steps of $3\times10^{-2}$.
In each step, we compute the $\ell_1$ and $\ell_2$ norms of the difference between the original and perturbed adjacency matrices. 
We then compare these empirical results with the theoretical bounds provided in Theorem~\ref{thm:Case1Adj} and Proposition~\ref{prop:Case2NorAdj}.
In Fig.~\ref{fig:theo_bdcrr_case1}, with the GSO as the unnormalized adjacency matrix $\bbS = \bbA$, two distinct scenarios are presented: varying $\epsilon_1$  with $\epsilon_2=0$ (left panel), and varying $\epsilon_2$ with $\epsilon_1=0.5$ (right panel).
Through 101 Monte Carlo trials, the theoretical bound closely aligns with the empirical $\ell_1$ norm, particularly in scenarios where increased $\epsilon_2$ leads to denser graphs.
This trend suggests that enhanced precision of the bounds as graph densities shift from sparse to dense.

In Fig.~\ref{fig:theo_bdcrr_case2}, employing the normalized adjacency matrix~$\bbS = \bbA_\textrm{n}$ as the GSO, a similar analysis is conducted. 
In the left panel, an increase in $\ell_1$ and $\ell_2$ norm bounds is observed under rising error, and Proposition~\ref{prop:Case2NorAdj} gives a stable upper bound.
However, the accuracy of the bound is comparatively less satisfactory in the normalized case.
The right-hand case illustrates a stable empirical $\ell_2$ norm with an increasing number of edges, while the $\ell_1$ norm and our bound present slight increases and decreases, respectively. 
These observations can be attributed to the following factors:
\textit{(i)} the normalization operation keeps the adjacency matrix operator norm around~1; \textit{(ii)} an increased number of edges raises the $\ell_1$ norm; \textit{(iii)} increases in the denominator in Lemma~\ref{lmm:EuNormalized} result in a general decrease in the bound.

\subsection{GF Sensitivity Test}
\label{sub,sec:gf_sensitivity_test}
In this experiment, we evaluate the sensitivity of GF to the probabilistic error model.
We employ an ER graph with $N=100$ nodes and a connection probability of $0.1$ as the baseline graph.
The GSO is set as the unnormalized adjacency matrix~$\bbS = \bbA$.
Our focus is on the relationship between filter distance and the bound in Theorem~\ref{thm:gfdistance} for low pass GFs of orders $K=1,2,3$.
The findings are presented in Figs.~\ref{fig:gf_sensitivity} and~\ref{fig:gf_sensitivity2}.

In Fig.~\ref{fig:gf_sensitivity}, the edge addition probability is fixed as $\epsilon_2=0.05$ and the edge deletion probability $\epsilon_1$ varies among $[0.1,0.2,0.3]$.
Over $101$ Monte Carlo trials, we plot the empirical GF distances $d(\bbh(\hbS, \bbS))$ alongside the \revised{corresponding GSO distances $d(\hbS, \bbS) = \| \bbE \|$} as scatter plots.
These empirical GF distances demonstrate the linear scaling with the bounds in Theorem~\ref{thm:gfdistance}, depicted as solid lines.
It is noted that the tightness of these bounds decreases with an increase in the GF order.
The primary aim of this analysis is to confirm the linear relationship in Theorem~\ref{thm:gfdistance}.

In Fig.~\ref{fig:gf_sensitivity2}, the expected output differences of GFs $\mbE[d(\bbh(\hbS), \bbh(\bbS))]$ with orders $K=1,2,3$ are plotted against the expected GSO differences $\mbE[d(\hbS,\bbS)]$ and the bound in Theorem~\ref{thm:Case1Adj}.
Over $101$ Monte Carlo trials with perturbation probabilities $\epsilon_1 \in [0,0.3]$ and $ \epsilon_2 \in [0,0.05]$, the left panel shows that output differences increase with the GF order.
The right panel confirms that the bound  $\mbE[Y]$ captures trends similar to the empirical expectation of GSO distance, corroborating Theorem~\ref{thm:Case1Adj}. 
This suggests that for small, sparsely connected graphs, the sensitivity of a low pass GF to perturbations intensifies as its order increases.

\subsection{GCNN Sensitivity Test}
\label{sec:exp_gcnn}
%
\begin{figure}[t]
    \centering
    \includegraphics[width = 0.9\linewidth]{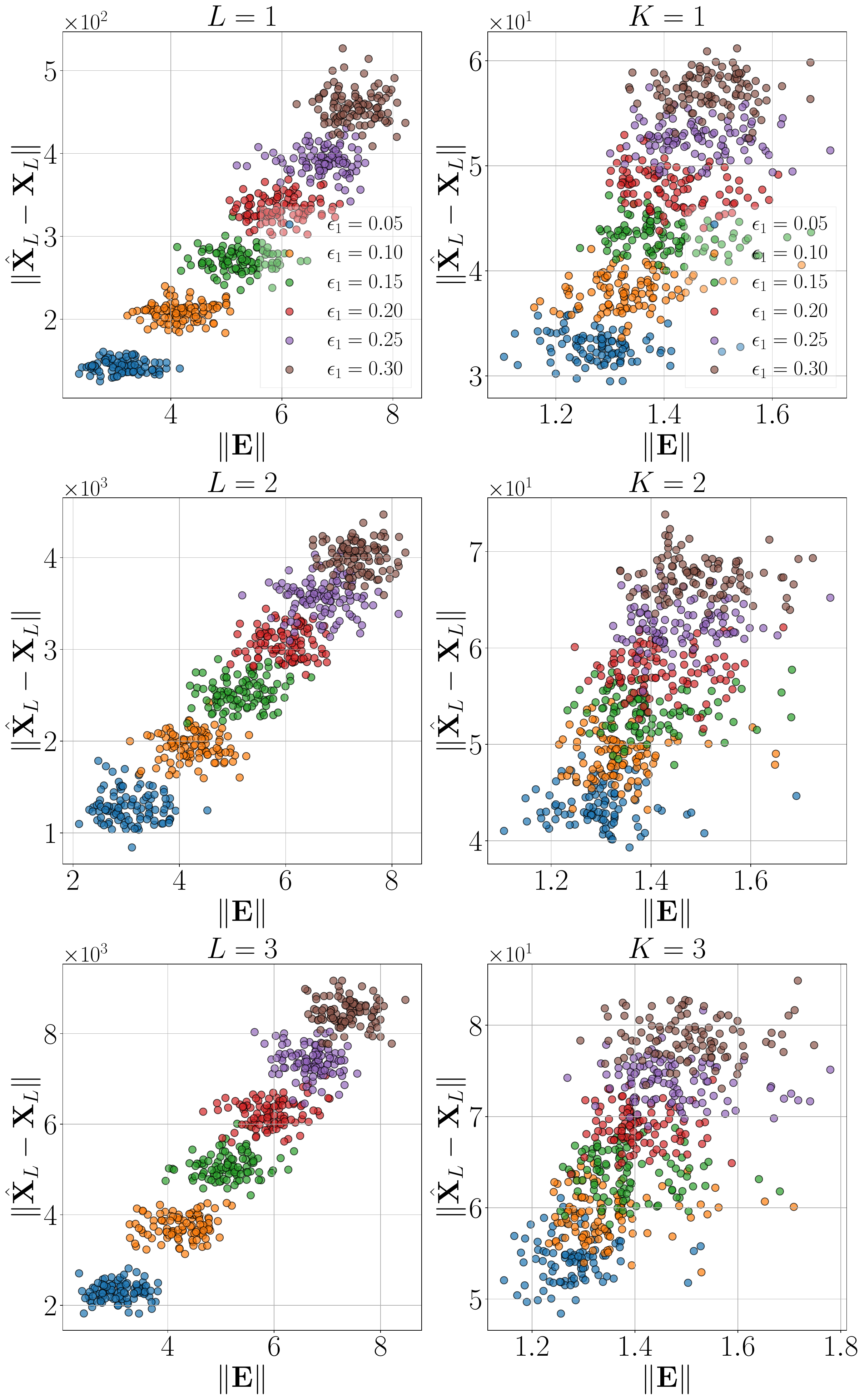}
    \caption{
    Correlation between GIN (left panel) and SGCN (right panel) output differences and GSO distances. 
    Analysis is conducted with varying edge deletion probabilities $\epsilon_1$, and a fixed edge addition probability $\epsilon_2 = 1\times10^{-4}$.
    }
    \label{fig:GINSGCN_sensitivity}
\end{figure}

%
\subsubsection{Linearity corroboration}
\label{sub,sub:GCNN_Linearity_Corroboration}
The experimental validation of Theorem~\ref{thm:gcnsensitivity} is conducted using GIN (Corollary~\ref{corr:GIN_MLP}) and SGCN (Corollary~\ref{corr:SGCN_sensitivity}).
We note that Corollary~\ref{corr:GIN_MLP} is only applicable for the single-layer GIN ($L=1$).
For the multi-layer GIN, our experiments show the recursion of linearity indicated in Theorem~\ref{thm:gcnsensitivity} empirically (see left panel of Fig.~\ref{fig:GINSGCN_sensitivity}).
These experiments are carried out on the Cora citation dataset, as discussed in Section~\ref{sub,sec:theoboundcoo}, to assess the sensitivity of GIN and SGCN to perturbed GSOs under evasion attacks. 

In Fig.~\ref{fig:GINSGCN_sensitivity}, for GIN (left panel), each layer comprises $16$ hidden features. 
GIN variants with $1$, $2$, and $3$ layers differ only in the number of cascaded graph filters with MLPs.
We investigate the correlation between empirical GIN output differences and GSO distances.
The edge deletion probability, $\epsilon_1$, is varied within $[5\times10^{-2}, 3\times10^{-1}]$ in increments of $5\times10^{-2}$, while the edge addition probability is fixed as $\epsilon_2 = 1 \times 10^{-4}$.
The results, categorized by edge deletion probability $\epsilon_1$, are obtained from 101 Monte Carlo trials, computing pairs of bounds and GIN output differences.
For SGCN (right panel), we examine networks of orders $K = [1,2,3]$ using a similar approach.
Empirical observations for $L=1,2,3$ and $K=1,2,3$ in GIN and SGCN demonstrate a linear correlation between output differences and GSO distances, corroborating the theoretical frameworks in Corollary~\ref{corr:GIN_MLP} and Corollary~\ref{corr:SGCN_sensitivity}.

Notably, the output differences observed in the two cases operate on different scales. 
For the SGCN with normalized GSO (right panel), the variation in output differences with increasing perturbation probability is more gradual compared to the unnormalized GSO used in GIN (left panel), which shows a steeper change. 
This discrepancy is likely due to the influence of the estimated GSO spectral norm $\lambda$.

\subsubsection{Accuracy drop under perturbation}
\label{sub,sub:GCNN_Accuracy_Corroboration}
\begin{figure*}[t]
    \centering
    \subfloat{
        \includegraphics[width = 0.5\linewidth]{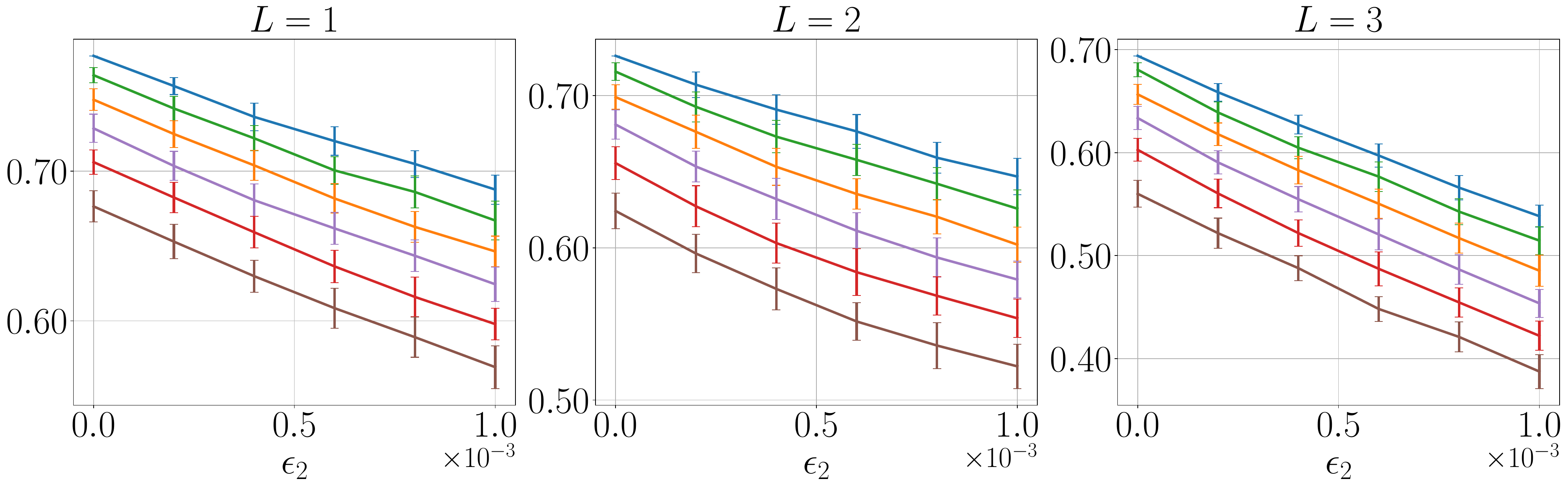}}
    \subfloat{
        \includegraphics[width = 0.5\linewidth]{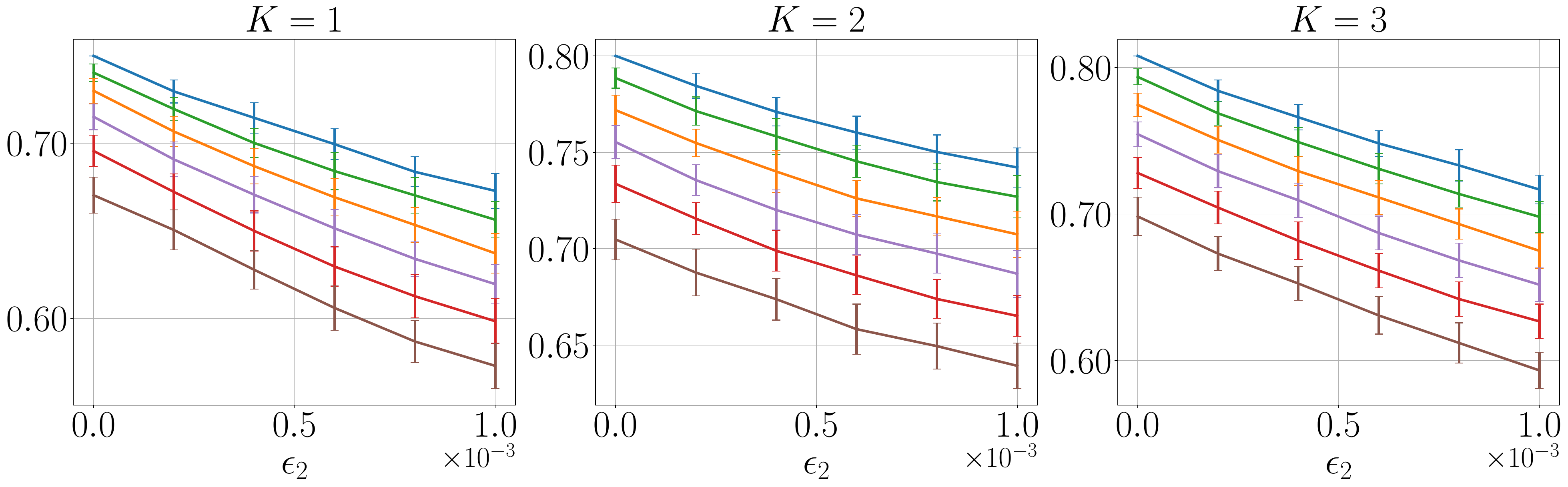}}
    \vspace{-3mm}
    \\
    \subfloat{
        \includegraphics[width = 0.5\linewidth]{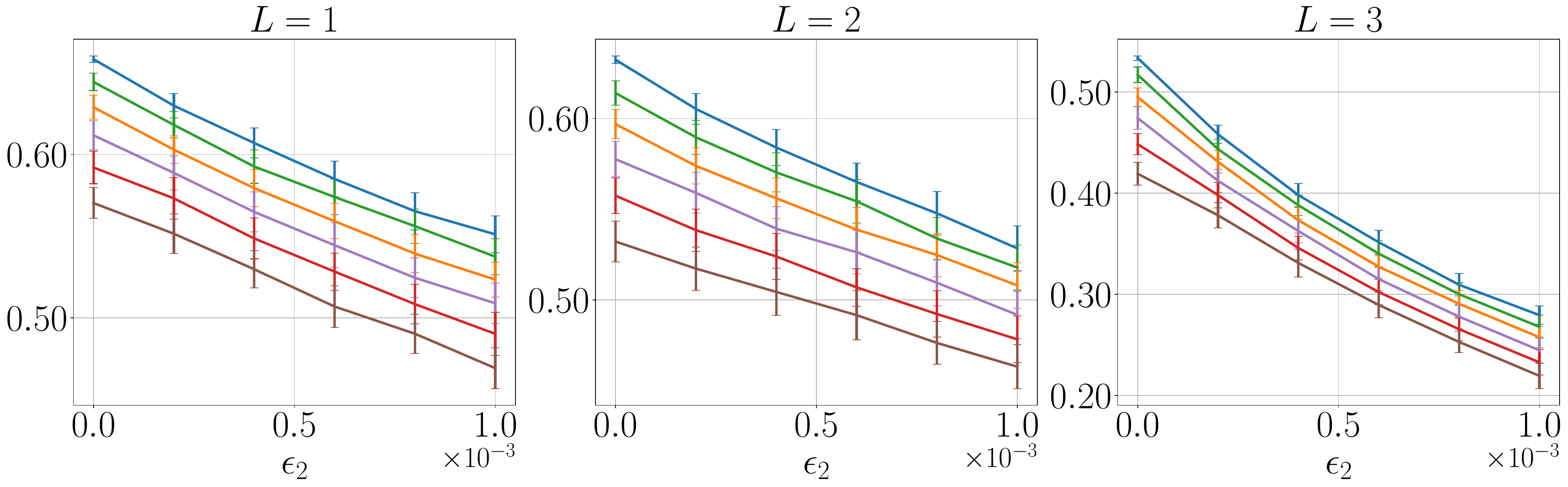}}
    \subfloat{
        \includegraphics[width = 0.5\linewidth]{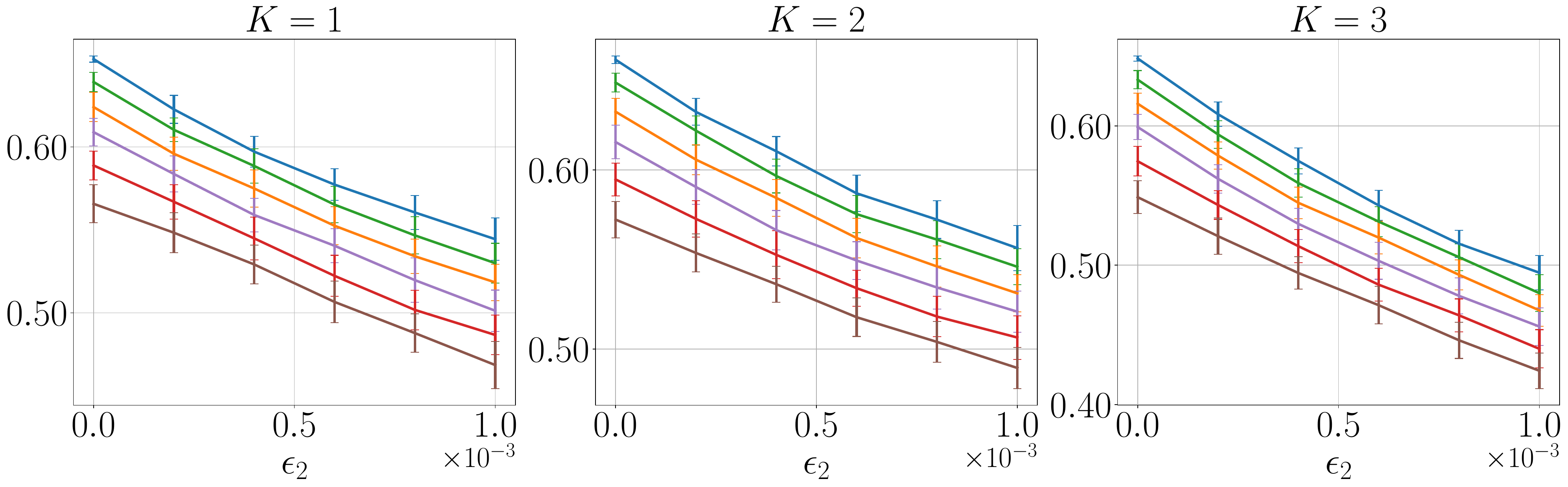}} 
    \vspace{-3mm}
    \\
    \setcounter{subfigure}{0} 
    \subfloat[GIN]{
    \label{fig:accuracy_drop_a}
        \includegraphics[width = 0.5\linewidth]{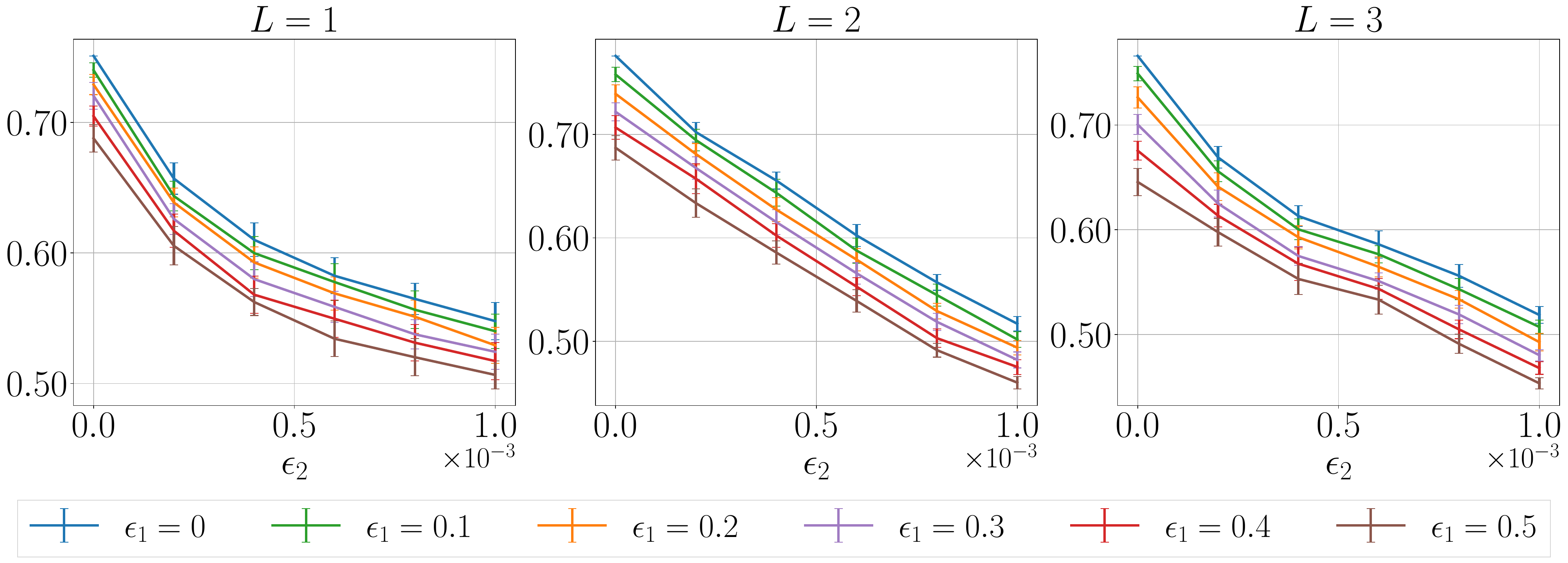}}
    \subfloat[SGCN]{
    \label{fig:accuracy_drop_b}
        \includegraphics[width = 0.5\linewidth]{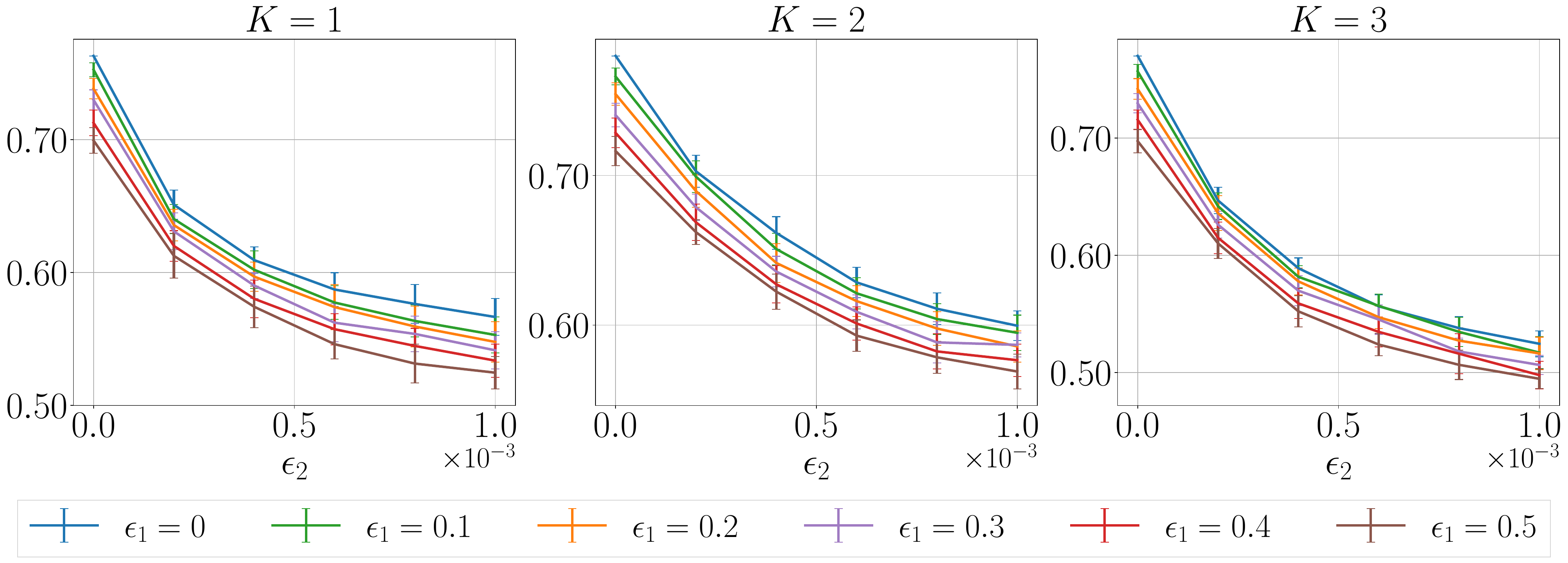}} 
    \caption{Accuracy changes for GIN ($L=1,2,3$) and SGCN ($K=1,2,3$) under perturbations, with $\epsilon_1 \in [0,0.5]$ and $\epsilon_2 \in [0,1\times10^{-3}]$.
    The 1st, 2nd and 3rd rows correspond to Cora, CiteSeer, and PubMed datasets, respectively.}
    \label{fig:accuracy_drop}
\end{figure*}

After affirming the linear sensitivity in Theorem~\ref{thm:gcnsensitivity}, we also examine the stability of GCNN under significant graph perturbations by observing the accuracy changes of same GCNN candidates as in Section~\ref{sub,sub:GCNN_Linearity_Corroboration}.

These experiments are conducted on three citation datasets: Cora, CiteSeer and PubMed~\cite{Sen_Namata_Bilgic_Getoor_Galligher_Eliassi-Rad_2008}.
The objective is to assess the impact of different perturbation budgets on the accuracy of GIN and SGCN models.
The perturbation budget parameters are set as follows: edge deletion probability $\epsilon_1$ varies within $[0,0.5]$ in increments of $0.1$, and edge addition probability $\epsilon_2$ varies within $[0,1\times10^{-3}]$ in increments of $2\times10^{-4}$.
Consistent with the experimental settings in Section~\ref{sub,sub:GCNN_Linearity_Corroboration}, the same GCNN candidates are utilized. 
The averaged accuracy results are shown in Fig.~\ref{fig:accuracy_drop}, where the bar indicates the standard variance of accuracy results.
The first, second and third rows correspond to datasets Cora, CiteSeer and PubMed, respectively.

A consistent pattern of accuracy decrease across all datasets and GCNN models is observed in Fig.~\ref{fig:accuracy_drop}, where the accuracy gradually decreases with increasing perturbation budgets. 
Notably, larger graphs (e.g., PubMed) exhibit a faster accuracy drop compared to smaller graphs (e.g., Cora and CiteSeer). 
This can be attributed to the alteration of more edges under the same perturbation budget in larger graphs.
When fixing edge deletion probability $\epsilon_1$, accuracy drops by approximately $10\%$ (as in Fig.~\ref{fig:accuracy_drop_a}, 1st row with $L=1$), and up to $20\%$ (as in Fig.~\ref{fig:accuracy_drop_a}, 3rd row with $L=3$).
With a fixed edge addition probability $\epsilon_2$, the accuracy drop is around $10\%$ (as in Fig.~\ref{fig:accuracy_drop_a}, 1st row with $L=1$), and approximately $5\%$ (as in Fig.~\ref{fig:accuracy_drop_a}, 3rd row with $L=1$).
This is likely because that, for sparse graphs, the same edge addition probability results in the addition of more edges than the number influenced by the same edge deletion probability.

The maximum of edge perturbation budget $\epsilon_1$ and $\epsilon_2$ is set to $0.5$ and $1\times10^{-3}$, respectively.
Consequently, up to $50\%$ of the edges are deleted, and $70\%$ are added relative to the original edge count.
In this case, the graph structure is significantly perturbed.
This significant graph perturbation makes the accuracy drop by up to $20\%$.
Under such large perturbations, GCNN gives finite responses. 
\revised{Thus, the GCNN is stable in our context even when the downstream task performance is significantly impacted, which is due to large-scale edge perturbations.}
This also verifies Theorem~\ref{thm:gcnsensitivity}, where it is stated that as long as the GSO perturbation is bounded/finite, the GCNN output difference is also bounded/finite.

\section{Conclusion and Discussion}
\label{sec:Conclusion}
This paper has presented an analytical framework for investigating the sensitivity of GCNNs to GSO perturbations, employing a probabilistic graph perturbation model.
We have established tighter error bounds than those previously available.
We have theoretically demonstrated that the expected output variation for a single layer of GCNN is linearly bounded by the GSO error, ensuring the stability (bounded output difference) of single-layer GCNN under bounded GSO errors.
For multilayer GCNN, our analysis has shown that the dependency of GCNN output difference on GSO error can be described through a recursion of linearity.
Specifically, this dependency is explicitly controlled by: the input feature, the GSO, error model parameters, Lipschitz constants of activation functions in GCNN, and GCNN weights.
Through numerical experiments, we have validated our theoretical findings and confirmed that GCNNs (exemplified with GIN and SGCN) maintain stability under large-scale graph edge perturbations, despite significant performance reductions.

In this work, our primary focus is on edge perturbations in graphs, while potential modifications to the graph signal and node injections are not considered.
Any alterations to the graph signal could be subsumed within the spectral norm when performing sensitivity analysis. 
However, node injection presents a challenge that cannot be addressed using the current definition of graph distance. 
This is due to the discrepancy in sizes between the unperturbed and perturbed graphs as the number of nodes increases.
A potential solution to this issue could involve redefining the GSO distance using a different metric. 
In this context, Optimal Transport (OT) and its variants emerge as viable candidates for this task~\cite{Lenaic18unOT, chapel20partialOT, maretic2022wasserstein}. 
These methods allow for the augmentation of a smaller graph, facilitating the establishment of a meaningful graph distance metric~\cite{Chuang22TMD}.
Consequently, future research could explore an encompassing approach that considers all of the aforementioned types of graph perturbations. 
Such an investigation has the potential to yield more comprehensive insights into the stability of GCNNs under perturbations.

Graph regularization methods are commonly used to achieve robust graph learning and estimation~\cite{sun22atksurvey}.
Research on adversarial training of GCNNs typically uses specifically designed loss functions to strengthen GCNNs against structural and feature perturbations, thus improving their performance stability against certain graph disturbances~\cite{jin2019latent, feng2019graph, dai2019adversarial, ren2021integrated, zhao2021expressive}.
In graph learning, several techniques have been developed to regulate graphs and signals based on specific graph signal assumptions to perform graph estimation~\cite{Dong16-Learning,segarra2017network,egilmez2018graph,pu2021kernel}.
With the inclusion of effective graph regularization, our sensitivity analysis offers insight that can contribute to the development of a uniform metric, paving the way for a more transferable and robust GCNN.

\appendices
\section{Upper Bound of $\|\bbE_u\|_1$} \label{lmm:EuNormalized_proof}
\begin{proof}[Proof of Lemma \ref{lmm:EuNormalized}]
We start with the first term in \eqref{eq:KenlayGeneralBound}, which is bounded by $\tau_u \leq d_v$
\begin{equation}\label{eq:KenlayBoundDelete}
    \sum_{v \in \ccalD_u}\dfrac{1}{\sqrt{d_ud_v}} \leq \sum_{v \in \ccalD_u}\dfrac{1}{\sqrt{d_u\tau_u}} = \dfrac{\delta_u^-}{\sqrt{d_u\tau_u}}.
\end{equation}
The second and third terms in \eqref{eq:KenlayGeneralBound} can be bounded using triangle inequality as follows
\begin{align} \label{eq:normalized_bound_process1}
    & \sum_{v \in \ccalA_u}\dfrac{1}{\sqrt{\hhatd_u\hhatd_v}}
    + \sum_{v \in \ccalR_u}\left|\dfrac{1}{\sqrt{d_ud_v}} - \dfrac{1}{\sqrt{\hhatd_u\hhatd_v}}\right| \nonumber \\
    & \leq \sum_{v \in \ccalA_u}\dfrac{1}{\sqrt{\hhatd_u\hhatd_v}}
    + \sum_{v \in \ccalR_u} \left( \dfrac{1}{\sqrt{d_ud_v}} + \dfrac{1}{\sqrt{\hhatd_u\hhatd_v}} \right) \nonumber \\
    & = \sum_{v \in \ccalR_u} \dfrac{1}{\sqrt{d_ud_v}} + 
    \sum_{v \in \ccalA_u\cup\ccalR_u}\dfrac{1}{\sqrt{\hhatd_u\hhatd_v}}.
\end{align}
For the first term in \eqref{eq:normalized_bound_process1}, we have
\begin{align}
    \sum_{v \in \ccalR_u} \dfrac{1}{\sqrt{d_ud_v}} 
    \leq \sum_{v \in \ccalR_u}  \dfrac{1}{\sqrt{d_u\tau_u}} 
    \leq \dfrac{d_u - \delta_u^-}{\sqrt{d_u\tau_u}}.
\end{align}
For the second term in \eqref{eq:normalized_bound_process1}, we have
\begin{align}
    & \sum_{v \in \ccalA_u\cup\ccalR_u}\dfrac{1}{\sqrt{\hhatd_u\hhatd_v}} \nonumber \\
    & = \sum_{v \in \ccalA_u\cup\ccalR_u}\dfrac{1}{\sqrt{(d_u+\delta_u^+ -\delta_u^-)(d_v + \delta_v^+ -\delta_v^-)}} 
\end{align}
Thus, we have a new bound, which is more suited to our error model, that is
\begin{equation}\label{eq:PreNewBound1}
    \begin{split}
        & \|\bbE_u\|_1 \leq \dfrac{\delta_u^-}{\sqrt{d_u\tau_u}} + \dfrac{d_u - \delta_u^-}{\sqrt{d_u\tau_u}}\\
        & + \sum_{v \in \ccalA_u\cup\ccalR_u}\dfrac{1}{\sqrt{(d_u+\delta_u^+ -\delta_u^-)(d_v + \delta_v^+ -\delta_v^-)}} \\
        & = \sqrt{d_u/\tau_u}
        + \sum_{v \in \ccalA_u\cup\ccalR_u}\dfrac{1}{\sqrt{(d_u+\delta_u^+ -\delta_u^-)(d_v + \delta_v^+ -\delta_v^-)}}.
    \end{split}
\end{equation}
We will adapt the general bound \eqref{eq:PreNewBound1} to the probabilistic error model presented in \eqref{eq:basic_model}.
In \eqref{eq:PreNewBound1}, we let
\begin{equation}
    \begin{split}
        & Z_{u,1}=\sqrt{d_u/\tau_u}, \\
    & Z_{u,2}=\sum_{v \in \ccalA_u\cup\ccalR_u}\frac{1}{\sqrt{(d_u+\delta_u^+ -\delta_u^-)(d_v + \delta_v^+ -\delta_v^-)}},
    \end{split}
\end{equation}
where $\delta_u^- \sim \textrm{Bin}(d_u, \epsilon_1)$, $\delta_u^+ \sim \textrm{Bin}(d_u^*, \epsilon_2)$, $\delta_v^- \sim \textrm{Bin}(d_v, \epsilon_1)$, $\delta_v^+ \sim \textrm{Bin}(d_v^*, \epsilon_2)$, $d_u^* = N-d_u-1$ and $d_v^* = N-d_v-1$.
Finally, we obtain 
\begin{align}\label{eq:f_uDeltas}
    \|\bbE_u\|_1 \leq Z_{u,1}+ Z_{u,2}.
\end{align}
This completes the proof.
\end{proof}
%

\section{Graph filter sensitivity} \label{apdx:gfdistance_proof}
\begin{proof}[Proof of Theorem \ref{thm:gfdistance}]
First, we recall the following result.
\begin{lemma}
    \label{lemma:powernormbound}
    {\rm (Lemma 3, \cite{Levie19-Transferability})} Suppose that $\hat{\bbS}, \bbS, \bbE \in \mathbb{R}^{N \times N}$ are Hermitian matrices satisfying $\hat{\bbS} = \bbS + \bbE$, and $\lambda = \max \{\|\hbS \|, \|\bbS \| \}$. 
    Then for every $k \geq 0$
    \begin{equation}
        \|\hat{\bbS}^k - \bbS^k \| = \|(\bbS + \bbE)^k - \bbS^k \|  \leq k \lambda^{k-1} \|\bbE \|.
    \end{equation}
\end{lemma}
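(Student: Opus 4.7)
The plan is to bound $\|\hbS^l - \bbS^l\|$ by rewriting the difference as a telescoping sum in which each summand contains exactly one factor of $\bbE = \hbS - \bbS$, and then to apply submultiplicativity of the spectral norm. Concretely, I would first establish the identity
\begin{equation*}
\hbS^l - \bbS^l \;=\; \sum_{j=0}^{l-1} \hbS^{j}\,(\hbS - \bbS)\,\bbS^{l-1-j} \;=\; \sum_{j=0}^{l-1} \hbS^{j}\,\bbE\,\bbS^{l-1-j},
\end{equation*}
which can be verified either by adding and subtracting the intermediate products $\hbS^{j+1}\bbS^{l-1-j}$ (most terms cancel in pairs) or by a short induction on $l$ using the recursion $\hbS^{l+1} - \bbS^{l+1} = \hbS(\hbS^{l} - \bbS^{l}) + (\hbS - \bbS)\bbS^{l}$. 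The base case $l=1$ reduces to $\hbS - \bbS = \bbE$, and $l=0$ is trivially $0 \leq 0$.

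With the identity in hand, I would apply the triangle inequality to the sum and submultiplicativity of the operator norm to each summand, obtaining
\begin{equation*}
\|\hbS^l - \bbS^l\| \;\leq\; \sum_{j=0}^{l-1} \|\hbS\|^{j}\,\|\bbE\|\,\|\bbS\|^{l-1-j}.
\end{equation*}
Upper bounding $\|\hbS\|$ and $\|\bbS\|$ by $\lambda = \max\{\|\hbS\|,\|\bbS\|\}$ collapses every summand to $\lambda^{l-1}\|\bbE\|$, and since the sum has $l$ terms the claimed bound $l\lambda^{l-1}\|\bbE\|$ follows.

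I do not anticipate a real obstacle, since both ingredients are standard. The only step warranting care is writing the telescoping identity so that each term carries exactly one factor of $\bbE$; a sign error or an off-by-one in the exponents would undermine the final count of $l$ terms. Incidentally, the Hermiticity hypothesis on $\hbS,\bbS,\bbE$ is not actually required for this particular bound -- submultiplicativity of the spectral norm alone suffices -- but under that hypothesis $\lambda$ coincides with the spectral radius, which is the quantity naturally controlled in the stability analysis of graph filters that follows in Theorem~\ref{thm:gfdistance}.
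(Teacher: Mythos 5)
Your proof is correct: the telescoping identity $\hbS^l - \bbS^l = \sum_{j=0}^{l-1}\hbS^{j}\bbE\,\bbS^{l-1-j}$, followed by the triangle inequality and submultiplicativity, is exactly the standard argument for this bound, and your exponent bookkeeping (with $l$ summands each bounded by $\lambda^{l-1}\|\bbE\|$) is right. The paper itself does not prove this lemma but imports it from the cited reference, so there is no alternative in-paper argument to compare against; your side remark that Hermiticity is not needed for the inequality itself is also accurate.
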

\noindent Expand the filter representation in $\| \bbh(\hbS) - \bbh(\bbS) \|$, as
\begin{equation}
    \label{eqproof:fd-1}
    \begin{split}
        & \left\| \bbh(\hbS) - \bbh(\bbS) \right\|
        = \left\| \sum_{k=0}^{K} \left( h_k\hbS^k - h_k\bbS^k \right) \right\|.
    \end{split}
\end{equation}
By Lemma \ref{lemma:powernormbound} and repeatably using triangle inequality, \eqref{eqproof:fd-1} is bounded by
\begin{equation}
    \label{eqproof:filter_deterministic}
    \begin{split}
        & \left\| \sum_{k=0}^{K} \left( h_k\hbS^k - h_k\bbS^k \right) \right\| 
        \leq \sum_{k=0}^{K} |h_k| \| \hbS^k - \bbS^k \| \\
        & \leq  \sum_{k=0}^{K} |h_k| k \lambda^{k-1} \| \bbE \| 
        = \sum_{k=1}^{K} |h_k| k \lambda^{k-1} \| \bbE \|.
    \end{split}
\end{equation}
The correlation between $\lambda$ and $\|\bbE\|$ has two cases:
\begin{enumerate}
    \item If $\lambda= \|\bbS\|$,
    \begin{equation}\label{eq:case1_expeccov0}
        \mathbb{E}[\lambda^{k-1}\|\mathbf{E}\|] =  \mathbb{E}[\lambda^{k-1}] \mathbb{E}[\|\mathbf{E}\|];
    \end{equation}
    \item If $\lambda=\|\hbS\|$,
    \begin{equation}\label{eq:case2_expeccovnot0}
        \mathbb{E}[\lambda^{k-1}\|\mathbf{E}\|] =  \mathbb{E}[\lambda^{k-1}] \mathbb{E}[\|\mathbf{E}\|] + \textrm{Cov}[\|\mathbf{E}\|,\lambda^{k-1}].
    \end{equation}
\end{enumerate}
The following proof is based on the second case \eqref{eq:case2_expeccovnot0} because the covariance term can be set to zero to include the first case. 
By using \eqref{eqproof:fd-1} and taking the expectation of \eqref{eqproof:filter_deterministic}, we obtain
\begin{equation}\label{eqproof:filter_expetation_1}
    \begin{split}
        & \mbE \left[ \left\| \bbh(\hbS) - \bbh(\bbS) \right\| \right] 
        \leq  \mbE \left[ \sum_{k=1}^{K} |h_k| k \lambda^{k-1} \| \bbE \| \right]  \\
        & \leq  \sum_{k=1}^{K} k |h_k| \mbE \left[ \lambda^{k-1} \| \bbE \| \right] \\
        & = \sum_{k=1}^{K} k |h_k| 
        \left( \mathbb{E}[\lambda^{k-1}] \mathbb{E}[\|\mathbf{E}\|] + \textrm{Cov}[\|\mathbf{E}\|,\lambda^{k-1}] \right).
    \end{split}
\end{equation}
In \eqref{eqproof:filter_expetation_1}, let 
\begin{align}
    \lambda_{k} & = \mbE[\lambda^{k-1}], \\
    \zeta_k & = \textrm{Cov}[\|\bbE\|,\lambda^{k-1}]. \label{eq:covbound}
\end{align}
Then, we have
\begin{equation}\label{eqproof:filter_expetation_2}
    \begin{split}
        \mbE \left[ \left\| \bbh(\hbS) - \bbh(\bbS) \right\| \right] 
        \leq \sum_{k=1}^{K} k |h_k|
        \left( \lambda_{k} \mathbb{E}[\|\mathbf{E}\|] + \zeta_k \right).
    \end{split}
\end{equation}
This completes the proof.
\end{proof}

\section{GCNN Sensitivity} \label{apdx:gcnsensitivity_proof}
\begin{proof}[Proof of Theorem \ref{thm:gcnsensitivity}]
\textbf{First Layer.} At the first layer $\ell = 1$, the graph convolution is performed as follows
\begin{align}
\bbY_{1} = \sum_{k=1}^K \bbS^k\bbX_{0}\bbH_{1 k}, \quad \bbX_1 = \sigma_1(\bbY_{1}).
\end{align}
For a perturbed GSO $\hbS$, the difference between the perturbed and clean graph convolutions is
\begin{align}\label{eq:GenGCN_difflayer1-1}
\hbY_{1} - \bbY_{1} = \sum_{k=1}^K(\hbS^k-\bbS^k)\bbX_0\bbH_{1k}.
\end{align}
Using Lemma \ref{lemma:powernormbound}, we can bound \eqref{eq:GenGCN_difflayer1-1} as follows
\begin{align}\label{eq:GenGCN_difflayer1-2}
\left\| \hbY_{1} - \bbY_{1} \right\| \leq \sum_{k=1}^K k\lambda^{k-1}\|\bbX_0\| \|\bbH_{1k}\| \|\bbE\|.
\end{align}
Similar to giving the upper bound for the expectation of graph filter distance from \eqref{eqproof:filter_deterministic} to \eqref{eqproof:filter_expetation_2}, given the constants $\lambda_{k} = \mbE[\lambda^{k-1}]$ and $\zeta_k = \textrm{Cov}[\|\bbE\|,\lambda^{k-1}]$, we take the expectation of \eqref{eq:GenGCN_difflayer1-2} and obtain
\begin{align}
& \mbE\left[ \left\| \hbY_{1} - \bbY_{1} \right\| \right] \leq \mbE\left[ \sum_{k=1}^K k\lambda^{k-1}\|\bbX_0\| \|\bbH_{1k}\| \|\bbE\| \right] \notag \\
& = \sum_{k=1}^K k \|\bbX_0\| \|\bbH_{1k}\| \mbE\left[ \lambda^{k-1} \|\bbE\| \right] \notag \\
& = \sum_{k=1}^K k\|\bbX_0\| \|\bbH_{1k}\| 
\left( \mbE[\lambda^{k-1}] \mbE\left[ \|\bbE\| \right] +  \textrm{Cov}[\|\bbE\|,\lambda^{k-1}] \right) \notag \\
& \leq \sum_{k=1}^K k\|\bbX_0\| \|\bbH_{1k}\| \left(  \lambda_{k} \mbE\left[ \|\bbE\| \right] +  \zeta_k \right).
\label{eq:GenGCN_diff_exp_MClamda_fixed}
\end{align}
For simplicity, let $B_1 = \sum_{k=1}^K k \lambda_{k} \|\bbX_0\| \|\bbH_{1k}\|$, and $D_1 = \sum_{k=1}^K k\zeta_k \|\bbX_0\| \|\bbH_{1k}\|$.
Thus, \eqref{eq:GenGCN_diff_exp_MClamda_fixed} illustrates that the expectation of the graph filter distance at the first layer is bounded by a polynomial of $\mbE\left[\|\bbE\|\right]$ as 
\begin{align}
    \mbE\left[ \left\| \hbY_{1} - \bbY_{1} \right\| \right]  \leq B_1\mbE[\|\bbE\|] + D_1.
\end{align}
Consider the nonlinearity function $\sigma_1(\cdot)$ at the first layer, which satisfies the Lipschitz condition
\begin{align}
\|\sigma_1(\hbY) - \sigma_1(\bbY)\| \leq C_{\sigma_1} \| \hbY - \bbY \|.
\end{align}
Applying this Lipschitz condition to \eqref{eq:GenGCN_difflayer1-2}, we have
\begin{align}
& \mbE\left[ \| \hbX_1 - \bbX_1 \| \right]
= \mbE\left[ \left\| \sigma_1(\hbY) - \sigma_1(\bbY) \right\|\right] \notag \\
& \leq C_{\sigma_1} \mbE\left[ \left\| \hbY - \bbY \right\|\right]
\leq C_{\sigma_1} B_1 \mbE\left[\|\bbE\|\right] + C_{\sigma_1} D_1. 
\label{eq:GenGCN_difflayer2-1}
\end{align}
\textbf{Second Layer.} 
At the second layer $\ell=2$, the graph convolution is performed as
\begin{align}
\bbY_2 = \sum_{k=1}^K \bbS^k\bbX_{1}\bbH_{2k}, \quad \bbX_2 = \sigma(\bbY_2).
\end{align}
The difference between the perturbed and clean graph convolutions is given by
\begin{align}\label{eq:GenGCN_difflayer2}
& \hbY_2 - \bbY_2
= \sum_{k=1}^K\hbS^k\hbX_1\bbH_{2k} - \sum_{k=1}^K\bbS^k\bbX_1\bbH_{2k} \notag \\
& = \sum_{k=1}^K (\hbS^k\hbX_1 - \hbS^k\bbX_1 + \hbS^k\bbX_1 - \bbS^k\bbX_1) \bbH_{2k} \notag \\
& = \sum_{k=1}^K \left( \hbS^k(\hbX_1 - \bbX_1) + (\hbS^k - \bbS^k)\bbX_1 \right) \bbH_{2k}.
\end{align}
Taking the expectation of \eqref{eq:GenGCN_difflayer2} and using \eqref{eq:case2_expeccovnot0}, Lemma \ref{lemma:powernormbound} as well as the submultiplicativity of the spectral norm, we have
\begin{align}
& \mbE \left[ \left\| \hbY_2 - \bbY_2 \right\| \right] \notag \\
& \leq \mbE \left[ \left\| \sum_{k=1}^K \left( \hbS^k(\hbX_1 - \bbX_1) + (\hbS^k - \bbS^k)\bbX_1 \right) \bbH_{2k} \right\| \right] \notag \\
& \leq \sum_{k=1}^K \|\bbH_{2k}\| \mbE \left[ \left\| \hbS^k(\hbX_1 - \bbX_1) \right\| + \left\| (\hbS^k - \bbS^k)\bbX_1 \right\| \right] \notag \\
& \revised{\leq} \sum_{k=1}^K \|\bbH_{2k}\| \Big( \mbE[\lambda^k] \mbE \left[ \|\hbX_1 - \bbX_1 \| \right] + \Cov\left[\|\hbX_1 - \bbX_1\|, \lambda^k\right] \notag \\
& + k\|\bbX_1\| \left( \mbE[\lambda^{k-1}] \mbE \left[ \|\bbE\| \right] 
+ \textrm{Cov}[\|\bbE\|,\lambda^{k-1}] \right) \Bigr). 
\label{eq:GenGCN_difflayer2-2}
\end{align}
Let 
\begin{align}
    \mu_{k,\ell-1} = \Cov[\|\hbX_{\ell-1} - \bbX_{\ell-1}\|, \lambda^k],
\end{align}
where $k=1,\ldots,K$, and $\ell = 2,\ldots,L$.
Thus, in \eqref{eq:GenGCN_difflayer2-2}, we have $\mu_{k,1} = \Cov \|\hbX_1 - \bbX_1\|$.
Then,
we can express \eqref{eq:GenGCN_difflayer2-2} as a function controlled by $\mbE[\|\bbE\|]$
\begin{align}
    & \mbE \left[ \left\| \hbY_2 - \bbY_2 \right\| \right] 
    \leq \sum_{k=1}^K \|\bbH_{2k}\| \Bigl( \lambda_{k+1} \mbE \left[ \|\hbX_1 - \bbX_1 \| \right] \nonumber \\ 
    & + \mu_{k,1} + k\lambda_k\|\bbX_1\| \mbE [\|\bbE\|]  + k \zeta_k\|\bbX_1\| \Bigr) \nonumber  \\
    & \leq \sum_{k=1}^K \|\bbH_{2k}\| \Bigl( \left( \lambda_{k+1}  C_{\sigma_1} B_1 + k\lambda_k\|\bbX_1\| \right) \mbE [\|\bbE\|] \nonumber \\
    & + \mu_{k,1} + \lambda_k C_{\sigma_1} D_1 + k \zeta_k\|\bbX_1\| \Bigr) \nonumber \\
    & \leq B_2\mbE [\|\bbE\|]  + D_2,
\end{align}
where $B_2 = \sum_{k=1}^K \left( \lambda_{k+1}  C_{\sigma_1} B_1 + k\lambda_k\|\bbX_1\| \right) \|\bbH_{2k}\| $ and $D_2 = \sum_{k=1}^K \left( \mu_{k,1}+ \lambda_k C_{\sigma_1} D_1 + k \zeta_k\|\bbX_1\| \right) \|\bbH_{2k}\|$.
Consider the second layer's nonlinearity function $\sigma_2(\cdot)$, we have 
\begin{align}
    \mbE\left[ \|\hbX_2 - \bbX_2\| \right] \leq C_{\sigma_2}B_2\mbE [\|\bbE\|]  + C_{\sigma_2} D_2.
\end{align}

\noindent \textbf{Generalization to Layer $\ell \geq 1$.}
By induction, we can generalize the result to the output difference at any layer $\ell \geq 1$
\begin{align}
\mbE\left[ \left\| \hbX_{\ell} - \bbX_{\ell} \right\| \right] \leq C_{\sigma_\ell} B_\ell \mbE\left[ \| \bbE \| \right] + C_{\sigma_\ell} D_\ell,
\end{align}
where
\begin{equation}
    \begin{split}
        B_\ell & = \sum_{k=1}^K \left( \lambda_{k+1}  C_{\sigma_{\ell-1}} B_{\ell-1} + k\lambda_k\|\bbX_{\ell-1}\| \right) \|\bbH_{\ell k}\|, \\
        D_\ell & = \sum_{k=1}^K \left( \mu_{k,\ell-1}+ \lambda_k C_{\sigma_{\ell-1}} D_{\ell-1} + k \zeta_k\|\bbX_{\ell-1}\| \right) \|\bbH_{\ell k}\|.
    \end{split}
\end{equation}
This completes the proof.
\end{proof}

%
\section{Single-layer GIN Sensitivity} 
\label{apdx:ginpercepsensitivity_proof}
\begin{proof}
In a single-layer GIN, we assume that the inner MLP has two layers as earlier introduced in the paper.
The outputs of a single-layer GIN ($L=1$) with original and perturbed GSOs are given as
\begin{align} 
    \label{eq:proof_gin_general_ori}
    \bbX_{L} = \bbh_{\bbTheta_L} \bigl(\bbS \bbX_{L-1}\bigr), \\
    \label{eq:proof_gin_general_per}
    \hbX_{L} = \bbh_{\bbTheta_L} \bigl(\hbS \hbX_{L-1}\bigr).
\end{align}
Expanding \eqref{eq:proof_gin_general_ori} and \eqref{eq:proof_gin_general_per} with full matrix transformations, we have
\begin{align} \label{eq:proof_gin_full_ori}
\bbX_{L}
& = \sigma_{L 2} ( \sigma_{L 1} ( \bbS \bbX_{L-1} \bbW_{L1} + \bbB_{L1} )\bbW_{L 2} + \bbB_{L2} ), \\
\label{eq:proof_gin_full_per}
\hbX_{L}
& = \sigma_{L 2} ( \sigma_{L 1} ( \hbS \hbX_{L-1} \bbW_{L1} + \bbB_{L1} )\bbW_{L 2} + \bbB_{L2} ).
\end{align}
We can split \eqref{eq:proof_gin_full_ori} as
\begin{subequations}
    \begin{align}
        & \bbY_{L 1} = \bbS \bbX_{L-1} \bbW_{L1} + \bbB_{L1}, \label{eq_proof_gin_mlp_ori_a}\\
        & \bbX_{L 1} = \sigma_{L 1} ( \bbY_{L 1} ), \label{eq_proof_gin_mlp_ori_b}\\
        & \bbY_{L 2} = \bbX_{L1} \bbW_{L2} + \bbB_{L2}, \label{eq_proof_gin_mlp_ori_c}\\
        & \bbX_{L} = \sigma_{L 2} ( \bbY_{L 2} ), \label{eq_proof_gin_mlp_ori_d}
    \end{align}
\end{subequations}
where $\bbX_{L 1}$ denotes the intermediate output of the first layer, and $\bbX_{L}$ represents the output of the second layer. 
For simplicity of notation, we use $\bbX_{L}$ instead of $\bbX_{L2}$.
Similarly, we split \eqref{eq:proof_gin_full_per} as
\begin{subequations}
    \begin{align}
        & \hbY_{L 1} = \hbS \hbX_{L-1} \bbW_{L1} + \bbB_{L1}, \label{eq_proof_gin_mlp_per_a} \\
        & \hbX_{L 1} = \sigma_{L 1} ( \hbY_{L 1} ), \label{eq_proof_gin_mlp_per_b}\\
        & \hbY_{L 2} = \hbX_{L1} \bbW_{L2} + \bbB_{L2}, \label{eq_proof_gin_mlp_per_c}\\
        & \hbX_{L} = \sigma_{L 2} ( \hbY_{L 2} ). \label{eq_proof_gin_mlp_per_d}
    \end{align}
\end{subequations}
Then, the $\ell_2$ norm of difference between the perturbed \eqref{eq_proof_gin_mlp_per_d} and clean outputs \eqref{eq_proof_gin_mlp_ori_d} is
\begin{align}
\label{eq_proof_gin_mlp_layer1}
    \| \hbX_{L} - \bbX_{L} \|
    = \| \sigma_{L 2} ( \hbY_{L 2} ) - \sigma_{L 2} ( \bbY_{L 2} ) \|. 
\end{align}
Using the Lipschitz condition of the nonlinearity function $\sigma_{L 2}(\cdot)$ in \eqref{eq_proof_gin_mlp_layer1}, we have
\begin{align}
    \label{eq_proof_gin_mlp_XL_final}
    \| \hbX_{L} - \bbX_{L} \|
    \leq C_{\sigma_{L 2}} \| \hbY_{L 2} - \bbY_{L 2} \|.
\end{align}
Representing $\hbY_{L 2}$ by \eqref{eq_proof_gin_mlp_per_c} and $\bbY_{L 2}$ by \eqref{eq_proof_gin_mlp_ori_c}, we have
%
\begin{align} \label{eq_proof_gin_mlp_YL2_final}
    \| \hbY_{L 2} - \bbY_{L 2}  \|
    & = \| \hbX_{L1} \bbW_{L2} - \bbX_{L1} \bbW_{L2} \| \notag \\
    & \leq \| \hbX_{L1} - \bbX_{L1} \|\|\bbW_{L2}\|.
\end{align}
Representing $\hbX_{L1}$ by \eqref{eq_proof_gin_mlp_per_b} and $\bbX_{L1}$ by \eqref{eq_proof_gin_mlp_ori_b}, we obtain
\begin{align} \label{eq_proof_gin_mlp_XL_1}
    \| \hbX_{L1} - \bbX_{L1} \|
    = \| \sigma_{L 1} ( \hbY_{L 1} ) - \sigma_{L 1} ( \bbY_{L 1} ) \|.
\end{align}
Using the Lipschitz condition of the nonlinearity function $\sigma_{L 1}(\cdot)$ in \eqref{eq_proof_gin_mlp_XL_1}, we have
\begin{align}
    \label{eq_proof_gin_mlp_XL1_final}
    \| \hbX_{L1} - \bbX_{L1} \|
    \leq C_{\sigma_{L 1}} \| \hbY_{L 1} - \bbY_{L 1} \|.
\end{align}
Representing $\hbY_{L 1}$ by \eqref{eq_proof_gin_mlp_per_a} and $\bbY_{L 1}$ by \eqref{eq_proof_gin_mlp_ori_a}, we have
\begin{align}
    \label{eq_proof_gin_mlp_YL1}
    \| \hbY_{L 1} - \bbY_{L 1} \|
    = \| \hbS \hbX_{L-1} \bbW_{L1} - \bbS \bbX_{L-1} \bbW_{L1} \|.
\end{align}
We can rewrite \eqref{eq_proof_gin_mlp_YL1} by deleting and adding $\bbS \hbX_{L-1} \bbW_{L1}$ as
\begin{align}
    & \hbS \hbX_{L-1} \bbW_{L1} - \bbS \bbX_{L-1} \bbW_{L1} \notag \\
    & = \hbS \hbX_{L-1} \bbW_{L1} - \bbS \hbX_{L-1} \bbW_{L1} + \bbS \hbX_{L-1} \bbW_{L1}  \notag \\
    & \quad - \bbS \bbX_{L-1} \bbW_{L1}\notag \\
    & = (\hbS - \bbS) \hbX_{L-1} \bbW_{L1} + \bbS (\hbX_{L-1} - \bbX_{L-1}) \bbW_{L1}. \label{eq_proof_gin_mlp_addanddelete}
\end{align}
Substituting \eqref{eq_proof_gin_mlp_addanddelete} into \eqref{eq_proof_gin_mlp_YL1}, and using the triangular inequality, we have
\begin{align}
    & \| \bbY_{L 1} - \hbY_{L 1} \| \notag \\
    & \leq \|(\hbS - \bbS) \hbX_{L-1} \bbW_{L1}\| + \|\bbS (\hbX_{L-1} - \bbX_{L-1}) \bbW_{L1} \| \notag \\
    & \leq \| \hbS  - \bbS \| \|\hbX_{L-1}\| \|\bbW_{L1}\| + \|\bbS\| \| \hbX_{L-1}  - \bbX_{L-1} \| \|\bbW_{L1}\|  \label{eq_proof_gin_mlp_YL1_decompose}.
\end{align}
For the second term in \eqref{eq_proof_gin_mlp_YL1_decompose}, we have $\hbX_{L-1} = \bbX_{L-1} = \bbX_0$ for $L=1$.
Then, with the definition of GSO error \eqref{eq:gso_distance}, \eqref{eq_proof_gin_mlp_YL1_decompose} becomes 
\begin{align}
    \label{eq_proof_gin_mlp_YL1_final}
    \| \hbY_{L 1} - \bbY_{L 1} \| \leq \| \bbE \| \|\bbX_{L-1}\| \|\bbW_{L1}\|.
\end{align}
By connecting \eqref{eq_proof_gin_mlp_YL1_final}, \eqref{eq_proof_gin_mlp_XL1_final}, \eqref{eq_proof_gin_mlp_YL2_final}, \eqref{eq_proof_gin_mlp_XL_final} together, we can bound the one-layer GIN output difference as
\begin{align}
    \label{eq_proof_gin_mlp_XL_diff_norm}
    \| \hbX_{L} - \bbX_{L} \|
    \leq C_{\sigma_{L 2}} C_{\sigma_{L 1}} \|\bbW_{L2}\| \|\bbW_{L1}\| \|\bbX_{L-1}\| \|\bbE\|.
\end{align}
Taking the expectation of \eqref{eq_proof_gin_mlp_XL_diff_norm}, we have
\begin{align}
    \mbE\left[ \| \hbX_{L} - \bbX_{L} \| \right] 
    \leq C_{\sigma_{L 2}} C_{\sigma_{L 1}}  \|\bbW_{L2}\| \|\bbW_{L1}\| \|\bbX_{L-1}\|
    \mbE\left[ \| \bbE \| \right].
\end{align}
Finally, let $\xi = C_{\sigma_{L 2}} C_{\sigma_{L 1}}  \|\bbW_{L2}\| \|\bbW_{L1}\| \|\bbX_{L-1}\|$, then, we have 
\begin{align}
    \mbE\left[ \| \hbX_{L} - \bbX_{L} \| \right] 
    \leq \xi \mbE\left[ \| \bbE \| \right].
\end{align}
This completes the proof.
\end{proof}


\bibliographystyle{IEEEtran}
\bibliography{IEEEabrv_added, reference}

\begin{thebibliography}{10}
\providecommand{\url}[1]{#1}
\csname url@samestyle\endcsname
\providecommand{\newblock}{\relax}
\providecommand{\bibinfo}[2]{#2}
\providecommand{\BIBentrySTDinterwordspacing}{\spaceskip=0pt\relax}
\providecommand{\BIBentryALTinterwordstretchfactor}{4}
\providecommand{\BIBentryALTinterwordspacing}{\spaceskip=\fontdimen2\font plus
\BIBentryALTinterwordstretchfactor\fontdimen3\font minus
  \fontdimen4\font\relax}
\providecommand{\BIBforeignlanguage}[2]{{%
\expandafter\ifx\csname l@#1\endcsname\relax
\typeout{** WARNING: IEEEtran.bst: No hyphenation pattern has been}%
\typeout{** loaded for the language `#1'. Using the pattern for}%
\typeout{** the default language instead.}%
\else
\language=\csname l@#1\endcsname
\fi
#2}}
\providecommand{\BIBdecl}{\relax}
\BIBdecl

\bibitem{Bronstein17Geometric}
M.~M. Bronstein, J.~Bruna, Y.~LeCun, A.~Szlam, and P.~Vandergheynst,
  ``Geometric deep learning: Going beyond {Euclidean} data,'' \emph{{IEEE}
  Signal Process. Mag.}, vol.~34, no.~4, pp. 18--42, July 2017.

\bibitem{Dong20-GSPML}
X.~Dong, D.~Thanou, L.~Toni, M.~Bronstein, and P.~Frossard, ``Graph signal
  processing for machine learning: A review and new perspectives,''
  \emph{{IEEE} Signal Process. Mag.}, vol.~37, no.~6, pp. 117--127, Oct. 2020.

\bibitem{Wu21Survey}
Z.~Wu, S.~Pan, F.~Chen, G.~Long, C.~Zhang, and P.~S. Yu, ``A comprehensive
  survey on graph neural networks,'' \emph{{IEEE} Trans. Neural Netw. Learning
  Syst.}, vol.~32, no.~1, pp. 4--24, Mar. 2021.

\bibitem{Isufi22GFOverview}
E.~Isufi, F.~Gama, D.~I. Shuman, and S.~Segarra, ``Graph filters for signal
  processing and machine learning on graphs,'' \emph{{IEEE} Trans. Signal
  Process.}, pp. 1--32, 2024.

\bibitem{Kipf17-vanillaGCN}
T.~N. Kipf and M.~Welling, ``Semi-supervised classification with graph
  convolutional networks,'' in \emph{Proc. 5th Int. Conf. Learn.
  Representations}, Toulon, France, Apr. 24-26, 2017, pp. 1--14.

\bibitem{Xu19-GIN}
K.~Xu, W.~Hu, J.~Leskovec, and S.~Jegelka, ``How powerful are graph neural
  networks?'' in \emph{Proc. 7th Int. Conf. Learn. Representations}, New
  Orleans, LA, USA, May 6-9, 2019, pp. 1--17.

\bibitem{Li19-originalSGCN}
Q.~Li, X.-M. Wu, H.~Liu, X.~Zhang, and Z.~Guan, ``Label efficient
  semi-supervised learning via graph filtering,'' in \emph{Proc. 32nd Conf.
  Comput. Vision and Pattern Recognition}, Long Beach, CA, USA, June 16-20,
  2019, pp. 9574--9583.

\bibitem{Wu19-SGCN}
F.~Wu, T.~Zhang, A.~H.~d. Souza, Jr, C.~Fifty, T.~Yu, and K.~Q. Weinberger,
  ``Simplifying graph convolutional networks,'' in \emph{Proc. 36th Int. Conf.
  Mach. Learning}, Long Beach, California, USA, June 9-15, 2019, pp.
  6861--6871.

\bibitem{Levie19Cayley}
R.~Levie, F.~Monti, X.~Bresson, and M.~M. Bronstein, ``Cayleynets: Graph
  convolutional neural networks with complex rational spectral filters,''
  \emph{{IEEE} Trans. Signal Process.}, vol.~67, no.~1, pp. 97--109, Nov. 2019.

\bibitem{Velickovic18GAT}
P.~Velickovic, G.~Cucurull, A.~Casanova, A.~Romero, P.~Li{\`{o}}, and
  Y.~Bengio, ``Graph attention networks,'' in \emph{Proc. 6th Int. Conf. Learn.
  Representations}, Vancouver, BC, Canada, Apr. 30 - May 3, 2018.

\bibitem{Isufi22EdgeNet}
E.~Isufi, F.~Gama, and A.~Ribeiro, ``{EdgeNets}: Edge varying graph neural
  networks,'' \emph{{IEEE} Trans. Pattern Anal. Mach. Intell.}, vol.~44,
  no.~11, pp. 7457--7473, Sept. 2022.

\bibitem{Coutino19EdgeFilter}
M.~Coutino, E.~Isufi, and G.~Leus, ``Advances in distributed graph filtering,''
  \emph{{IEEE} Trans. Signal Process.}, vol.~67, no.~9, pp. 2320--2333, Mar.
  2019.

\bibitem{Sandryhaila13-DSP}
A.~Sandryhaila and J.~M.~F. Moura, ``Discrete signal processing on graphs,''
  \emph{{IEEE} Trans. Signal Process.}, vol.~61, no.~7, pp. 1644--1656, Apr.
  2013.

\bibitem{defferrard2016convolutional}
M.~Defferrard, X.~Bresson, and P.~Vandergheynst, ``Convolutional neural
  networks on graphs with fast localized spectral filtering,'' in \emph{Proc.
  30th Conf. Neural Inform. Process. Syst.}, Barcelona, Spain, Dec. 5-10, 2016,
  pp. 3844--3858.

\bibitem{Dong16-Learning}
X.~Dong, D.~Thanou, P.~Frossard, and P.~Vandergheynst, ``Learning laplacian
  matrix in smooth graph signal representations,'' \emph{{IEEE} Trans. Signal
  Process.}, vol.~64, no.~23, pp. 6160--6173, Dec. 2016.

\bibitem{segarra2017network}
S.~Segarra, A.~G. Marques, G.~Mateos, and A.~Ribeiro, ``Network topology
  inference from spectral templates,'' \emph{{IEEE} Trans. Signal Inf. Process.
  Netw.}, vol.~3, no.~3, pp. 467--483, July 2017.

\bibitem{buciulea2022learning}
A.~Buciulea, S.~Rey, and A.~G. Marques, ``Learning graphs from smooth and
  graph-stationary signals with hidden variables,'' \emph{{IEEE} Trans. Signal
  Inf. Process. Netw.}, vol.~8, pp. 273--287, Mar. 2022.

\bibitem{Miettinen21-ErrorEffect}
J.~Miettinen, S.~A. Vorobyov, and E.~Ollila, ``Modelling and studying the
  effect of graph errors in graph signal processing,'' \emph{Signal Process.},
  vol. 189, 108256, pp. 1--8, Dec. 2021.

\bibitem{Gao21-Stability}
Z.~Gao, E.~Isufi, and A.~Ribeiro, ``Stability of graph convolutional neural
  networks to stochastic perturbations,'' \emph{Signal Process.}, vol. 188,
  108216, pp. 1--15, Nov. 2021.

\bibitem{xu19pgd}
K.~Xu, H.~Chen, S.~Liu, P.-Y. Chen, T.-W. Weng, M.~Hong, and X.~Lin, ``Topology
  attack and defense for graph neural networks: An optimization perspective,''
  in \emph{Proc. 28th Int. Joint Conf. Artif. Intell.}, Macao, China, Aug.
  10-16, 2019, pp. 3961--3967.

\bibitem{Ceci20-Graph}
E.~Ceci and S.~Barbarossa, ``Graph signal processing in the presence of
  topology uncertainties,'' \emph{IEEE Trans. Signal Process.}, vol.~68, pp.
  1558--1573, Feb. 2020.

\bibitem{Kenlay21-Rewire}
H.~Kenlay, D.~Thanou, and X.~Dong, ``On the stability of graph convolutional
  neural networks under edge rewiring,'' in \emph{Proc. 46th {IEEE} Int. Conf.
  Acoustic, Speech and Signal Process.}, Toronto, Canada, June 6-11, 2021, pp.
  8513--8517.

\bibitem{Kenlay2021InterpretableSB}
------, ``Interpretable stability bounds for spectral graph filters,'' in
  \emph{Proc. 38th Int. Conf. Mach. Learning}, vol. 139, Virtual, July 18-24,
  2021, pp. 5388--5397.

\bibitem{Gama20-Stability}
F.~Gama, J.~Bruna, and A.~Ribeiro, ``Stability properties of graph neural
  networks,'' \emph{{IEEE} Trans. Signal Process.}, vol.~68, pp. 5680--5695,
  Sept. 2020.

\bibitem{levie2021transferabilityJMLR}
R.~Levie, W.~Huang, L.~Bucci, M.~Bronstein, and G.~Kutyniok, ``Transferability
  of spectral graph convolutional neural networks,'' \emph{J. Mach. Learn.
  Res.}, vol.~22, no.~1, pp. 12\,462--12\,520, Nov. 2021.

\bibitem{dai18atkgph}
H.~Dai, H.~Li, T.~Tian, X.~Huang, L.~Wang, J.~Zhu, and L.~Song, ``Adversarial
  attack on graph structured data,'' in \emph{Proc. 35th Int. Conf. Mach.
  Learning}, vol.~80, Stockholm, Sweden, July 10-15, 2018, pp. 1115--1124.

\bibitem{Zugner18_AttackonGraph_KDD}
D.~Z\"{u}gner, A.~Akbarnejad, and S.~G\"{u}nnemann, ``Adversarial attacks on
  neural networks for graph data,'' in \emph{Proc. 24th ACM SIGKDD Int. Conf.
  Knowl. Discov. \& Data Mining}, London, United Kingdom, Aug. 19-23, 2018, p.
  2847–2856.

\bibitem{wu19advegsgraphdata}
H.~Wu, C.~Wang, Y.~Tyshetskiy, A.~Docherty, K.~Lu, and L.~Zhu, ``Adversarial
  examples for graph data: Deep insights into attack and defense,'' in
  \emph{Proc. 28th Int. Joint Conf. Artif. Intell.}, Macao, China, Aug. 10-16,
  2019, pp. 4816--4823.

\bibitem{wang2021certifiedKDD}
B.~Wang, J.~Jia, X.~Cao, and N.~Z. Gong, ``Certified robustness of graph neural
  networks against adversarial structural perturbation,'' in \emph{Proc. 27th
  ACM SIGKDD Int. Conf. Knowl. Discov. \& Data Mining}, Virtual, Aug. 14-18,
  2021, pp. 1645--1653.

\bibitem{lin22gphspecattck}
L.~Lin, E.~Blaser, and H.~Wang, ``Graph structural attack by perturbing
  spectral distance,'' in \emph{Proc. 28th ACM SIGKDD Int. Conf. Knowl. Discov.
  \& Data Mining}, Washington DC, USA, Aug. 14-18, 2022, p. 989–998.

\bibitem{Wang22-Eusipco}
X.~Wang, E.~Ollila, and S.~A. Vorobyov, ``Graph neural network sensitivity
  under probabilistic error model,'' in \emph{Proc. 30th Eur. Signal Process.
  Conf.}, Belgrade, Serbia, Aug. 29 - Sept. 2, 2022, pp. 2146--2150.

\bibitem{Shuman13-EmergingGSP}
D.~I. Shuman, S.~K. Narang, P.~Frossard, A.~Ortega, and P.~Vandergheynst, ``The
  emerging field of signal processing on graphs: Extending high-dimensional
  data analysis to networks and other irregular domains,'' \emph{IEEE Signal
  Process. Mag.}, vol.~30, no.~3, pp. 83--98, Apr. 2013.

\bibitem{penrose2003random}
M.~Penrose, ``Random geometric graphs,'' \emph{Oxford Stud. in Probab.},
  vol.~5, 2003.

\bibitem{hagberg2008networx}
A.~Hagberg, P.~Swart, and D.~S~Chult, ``Exploring network structure, dynamics,
  and function using networkx,'' Los Alamos National Lab., Los Alamos, NM, USA,
  Tech. Rep., 2008.

\bibitem{golub2012matrix}
G.~Golub and C.~Van~Loan, \emph{Matrix Computations vol. 3}.\hskip 1em plus
  0.5em minus 0.4em\relax Baltimore, MD, USA: The Johns Hopkins Univ. Press,
  2012.

\bibitem{aven1985upper}
T.~Aven, ``Upper (lower) bounds on the mean of the maximum (minimum) of a
  number of random variables,'' \emph{J. Appl. Probab.}, vol.~22, no.~3, pp.
  723--728, Sept. 1985.

\bibitem{ohayon2023perceptionrobustness}
G.~Ohayon, T.~Michaeli, and M.~Elad, ``The perception-robustness tradeoff in
  deterministic image restoration,'' \emph{arXiv:2311.09253, [eess.IV]}, 2023.

\bibitem{Weisfeiler68-WLTest}
B.~Weisfeiler and A.~Lehman, ``A reduction of a graph to a canonical form and
  an algebra arising during this reduction,'' \emph{Nauchno-Technicheskaya
  Informatsia}, vol.~2, no.~9, pp. 12--16, 1968.

\bibitem{Sen_Namata_Bilgic_Getoor_Galligher_Eliassi-Rad_2008}
P.~Sen, G.~Namata, M.~Bilgic, L.~Getoor, B.~Galligher, and T.~Eliassi-Rad,
  ``Collective classification in network data,'' \emph{AI Magazine}, vol.~29,
  no.~3, p.~93, Sept. 2008.

\bibitem{Lenaic18unOT}
L.~Chizat, G.~Peyré, B.~Schmitzer, and F.-X. Vialard, ``Unbalanced optimal
  transport: Dynamic and kantorovich formulations,'' \emph{J. Funct. Anal.},
  vol. 274, no.~11, pp. 3090--3123, June 2018.

\bibitem{chapel20partialOT}
L.~Chapel, M.~Z. Alaya, and G.~Gasso, ``Partial optimal tranport with
  applications on positive-unlabeled learning,'' in \emph{Proc. 33th Conf.
  Neural Inform. Process. Syst.}, H.~Larochelle, M.~Ranzato, R.~Hadsell,
  M.~Balcan, and H.~Lin, Eds., vol.~33, Virtual, Dec. 7-12, 2020, pp.
  2903--2913.

\bibitem{maretic2022wasserstein}
H.~P. Maretic, M.~El~Gheche, M.~Minder, G.~Chierchia, and P.~Frossard,
  ``Wasserstein-based graph alignment,'' \emph{{IEEE} Trans. Signal Inf.
  Process. Netw.}, vol.~8, pp. 353--363, Apr. 2022.

\bibitem{Chuang22TMD}
C.-Y. Chuang and S.~Jegelka, ``Tree mover's distance: Bridging graph metrics
  and stability of graph neural networks,'' in \emph{Proc. 35th Conf. Neural
  Inform. Process. Syst.}, vol.~35, New Orleans, USA, Nov. 28 - Dec. 9, 2022,
  pp. 2944--2957.

\bibitem{sun22atksurvey}
L.~Sun, Y.~Dou, C.~Yang, K.~Zhang, J.~Wang, P.~S. Yu, L.~He, and B.~Li,
  ``Adversarial attack and defense on graph data: A survey,'' \emph{{IEEE}
  Trans. Knowl. Data Eng.}, pp. 1--20, Sept. 2022.

\bibitem{jin2019latent}
H.~Jin and X.~Zhang, ``Latent adversarial training of graph convolution
  networks,'' in \emph{Proc. 36th Int. Conf. Mach. Learning Workshop Learn.
  Reasoning with Graph-structured Representations}, Long Beach, California,
  USA, June 9-15, 2019, pp. 1--7.

\bibitem{feng2019graph}
F.~Feng, X.~He, J.~Tang, and T.-S. Chua, ``Graph adversarial training:
  Dynamically regularizing based on graph structure,'' \emph{{IEEE} Trans.
  Knowl. Data Eng.}, vol.~33, no.~6, pp. 2493--2504, June 2019.

\bibitem{dai2019adversarial}
Q.~Dai, X.~Shen, L.~Zhang, Q.~Li, and D.~Wang, ``Adversarial training methods
  for network embedding,'' in \emph{Proc. 30th The World Wide Web Conf.}, San
  Francisco, CA, USA, May 13-17, 2019, pp. 329--339.

\bibitem{ren2021integrated}
J.~Ren, Z.~Zhang, J.~Jin, X.~Zhao, S.~Wu, Y.~Zhou, Y.~Shen, T.~Che, R.~Jin, and
  D.~Dou, ``Integrated defense for resilient graph matching,'' in \emph{Proc.
  38th Int. Conf. Mach. Learning}, vol. 139, Virtual, July 18-24, 2021, pp.
  8982--8997.

\bibitem{zhao2021expressive}
X.~Zhao, Z.~Zhang, Z.~Zhang, L.~Wu, J.~Jin, Y.~Zhou, R.~Jin, D.~Dou, and
  D.~Yan, ``Expressive 1-lipschitz neural networks for robust multiple graph
  learning against adversarial attacks,'' in \emph{Proc. 38th Int. Conf. Mach.
  Learning}, vol. 139, July 18-24, 2021, pp. 12\,719--12\,735.

\bibitem{egilmez2018graph}
H.~E. Egilmez, E.~Pavez, and A.~Ortega, ``Graph learning from filtered signals:
  Graph system and diffusion kernel identification,'' \emph{{IEEE} Trans.
  Signal Inf. Process. Netw.}, vol.~5, no.~2, pp. 360--374, June 2018.

\bibitem{pu2021kernel}
X.~Pu, S.~L. Chau, X.~Dong, and D.~Sejdinovic, ``Kernel-based graph learning
  from smooth signals: A functional viewpoint,'' \emph{{IEEE} Trans. Signal
  Inf. Process. Netw.}, vol.~7, pp. 192--207, Feb. 2021.

\bibitem{Levie19-Transferability}
R.~Levie, E.~Isufi, and G.~Kutyniok, ``On the transferability of spectral graph
  filters,'' in \emph{Proc. 13th Int. Conf. on Sampling Theory and Appl.},
  Bordeaux, France, July 8-12, 2019, pp. 1--5.

\end{thebibliography}

\end{document}